
\documentclass{article}

\usepackage{subfigure}
\usepackage{amsmath}
\usepackage{amssymb}
\usepackage{comment}
\usepackage{graphicx}
\usepackage{cancel}
\usepackage{amsthm}
\usepackage{natbib}
\usepackage{dsfont}
\usepackage{xcolor}
\usepackage{ stmaryrd }
\usepackage{enumitem} 

\usepackage{setspace}

\usepackage{booktabs}

\usepackage{times}
\usepackage{graphicx} 
\usepackage{subfigure} 

\usepackage{natbib}

\usepackage[algo2e,inoutnumbered,linesnumbered,algoruled,vlined]{algorithm2e}

\usepackage{hyperref}



\usepackage[accepted]{icml2018}

\newcommand{\x}{{X}}

\newcommand{\stl}{l}
\newcommand{\Lo}{{\cal L}}
\newcommand{\Oc}{{\cal O}}
\newcommand{\Lot}{\tilde{\Lo}}

\newcommand{\D}{ {\cal D} }

\newcommand{\E}{\mathds{E}}

\newtheorem{thm}{Theorem}
\newtheorem{remark}{Remark}

\newtheorem{lemma}{Lemma}
\newtheorem{prop}{Proposition}

\DeclareMathOperator*{\argmin}{arg\min}

\usepackage{cleveref}



\newtheorem{assumption}{\textbf{H}\hspace{-3pt}}
\Crefname{assumption}{\textbf{H}\hspace{-3pt}}{\textbf{H}\hspace{-3pt}}
\crefname{assumption}{\textbf{H}}{\textbf{H}}

\usepackage{multibib}
\newcites{New}{References}

\graphicspath{{./},{./figures}}

\icmltitlerunning{Asynchronous Stochastic Quasi-Newton MCMC}

\begin{document} 

\twocolumn[
\icmltitle{Asynchronous Stochastic Quasi-Newton MCMC for Non-Convex Optimization}




\begin{icmlauthorlist}
\icmlauthor{Umut \c Sim\c sekli}{tpt}
\icmlauthor{\c Ca\u{g}atay Y{\i}ld{\i}z}{aalto}
\icmlauthor{Thanh Huy Nguyen}{tpt}
\icmlauthor{Ga\"{e}l Richard}{tpt} 
\icmlauthor{A. Taylan Cemgil}{boun}
\end{icmlauthorlist}

\icmlaffiliation{aalto}{Department of Computer Science, Aalto University, Espoo, 02150, Finland}
\icmlaffiliation{boun}{Department of Computer Engineering, Bo\u{g}azi\c ci University, 34342, Bebek, Istanbul, Turkey}
\icmlaffiliation{tpt}{LTCI, T\'{e}l\'{e}com ParisTech, Universit\'{e} Paris-Saclay, 75013, Paris, France}

\icmlcorrespondingauthor{Umut \c Sim\c sekli}{umut.simsekli@telecom-paristech.fr}


\icmlkeywords{mcmc}

\vskip 0.3in
]



\printAffiliationsAndNotice{}  

\begin{abstract} 
Recent studies have illustrated that stochastic gradient Markov Chain Monte Carlo techniques have a strong potential in non-convex optimization, where local and global convergence guarantees can be shown under certain conditions. By building up on this recent theory, in this study, we develop an asynchronous-parallel stochastic L-BFGS algorithm for non-convex optimization. The proposed algorithm is suitable for both distributed and shared-memory settings. We provide formal theoretical analysis and show that the proposed method achieves an ergodic convergence rate of ${\cal O}(1/\sqrt{N})$ ($N$ being the total number of iterations) and it can achieve a linear speedup under certain conditions. We perform several experiments on both synthetic and real datasets. The results support our theory and show that the proposed algorithm provides a significant speedup over the recently proposed synchronous distributed L-BFGS algorithm. 
\end{abstract} 

\section{Introduction}
\label{sec:intro}

Quasi-Newton (QN) methods are powerful optimization techniques that are able to attain fast convergence rates by incorporating local geometric information through an approximation of the inverse of the Hessian matrix.
The L-BFGS algorithm \cite{nocedal} is a well-known \emph{limited-memory} QN method that aims at solving the following optimization problem:
\begin{align}
\theta^\star = \argmin_{  \theta \in \mathbb{R}^d} \Bigl\{ U(\theta) \triangleq \sum_{i=1}^{N_Y} U_{i}(\theta) \Bigr\}, \label{eqn:optim_prob}
\end{align}
where $U$ is a twice continuously differentiable function that can be convex or non-convex, and is often referred to as the empirical risk. In a typical machine learning context, a dataset $Y$ with $N_Y$ independent and identically distributed (i.i.d.) data points is considered, which renders the function $U$ as a sum of $N_Y$ different functions $\{U_i\}_{i=1}^{N_Y}$.

In large scale applications, the number of data points $N_Y$ often becomes prohibitively large and therefore using a `batch' L-BFGS algorithm becomes computationally infeasible. As a remedy, \emph{stochastic} L-BGFS methods have been proposed \cite{byrd2016stochastic,schraudolph2007stochastic,moritz2016linearly,zhou2017stochastic,yousefian2017stochastic,zhao2017stochastic}, which aim to reduce the computational requirements of L-BFGS by replacing $\nabla U$ (i.e.\ the full gradients that are required by L-BFGS) with some \emph{stochastic gradients} that are computed on small subsets of the dataset. However, using stochastic gradients within L-BFGS turns out to be a challenging task since it brings additional technical difficulties, which we will detail in Section~\ref{sec:bg}.

In a very recent study, \citet{berahas2016multi} proposed a parallel stochastic L-BFGS algorithm, called multi-batch L-BFGS (mb-L-BFGS), which is suitable for synchronous distributed architectures. This work illustrated that carrying out L-BFGS in a distributed setting introduces further theoretical and practical challenges; however, if these challenges are addressed, stochastic L-BFGS can be powerful in a distributed setting as well, and outperform conventional algorithms such as distributed stochastic gradient descent (SGD), as shown by their experimental results. 

Despite the fact that synchronous parallel algorithms have clear advantages over serial optimization algorithms, the computational efficiency of synchronous algorithms is often limited by the overhead induced by the synchronization and coordination among the worker processes.  
Inspired by asynchronous parallel stochastic optimization techniques \cite{agarwal2011distributed,lian2015asynchronous,zhang2015deep,zhao2015fast,pmlr-v70-zheng17b}, in this study, we propose an asynchronous parallel stochastic L-BFGS algorithm for large-scale non-convex optimization problems. The proposed approach aims at speeding up the synchronous algorithm presented in \cite{berahas2016multi} by allowing all the workers work independently from each other and circumvent the inefficiencies caused by synchronization and coordination.

Extending stochastic L-BFGS to asynchronous settings is a highly non-trivial task and brings several challenges. In our strategy, we first reformulate the optimization problem \eqref{eqn:optim_prob} as a sampling problem where the goal becomes drawing random samples from a distribution whose density is concentrated around $\theta^\star$. We then build our algorithm upon the recent stochastic gradient Markov Chain Monte Carlo (SG-MCMC) techniques \cite{chen2015convergence,chen2016stochastic} that have close connections with stochastic optimization techniques \cite{dalalyan2017further,raginsky17a,zhang17b}, and have proven successful in large-scale Bayesian machine learning. 
We provide formal theoretical analysis and prove non-asymptotic guarantees for the proposed algorithm. Our theoretical results show that the proposed algorithm achieves an ergodic global convergence with rate ${\cal O}(1/\sqrt{N})$, where $N$ denotes the total number of iterations. Our results further imply that the algorithm can achieve a linear speedup under ideal conditions.

For evaluating the proposed method, we conduct several experiments on synthetic and real datasets. The experimental results support our theory: our experiments on a large-scale matrix factorization problem show that the proposed algorithm provides a significant speedup over the synchronous parallel L-BFGS algorithm. 

\section{Technical Background}
\label{sec:bg}

\textbf{Preliminaries: } 
As opposed to the classical optimization perspective, we look at the optimization problem \eqref{eqn:optim_prob} from a \emph{maximum a-posteriori} (MAP) estimation point of view, where we consider $\theta$ as a random variable in $\mathds{R}^d$ and $\theta^\star$ as the optimum of a Bayesian posterior whose density is given as $p(\theta|Y) \propto \exp(-U(\theta))$, where $Y \equiv \{Y_1, \dots, Y_{N_Y}\}$ is a set of i.i.d.\ observed data points. Within this context, $U(\theta)$ is often called the \emph{potential energy} and defined as $U(\theta) =  -[\log p(\theta) + \sum_{i=1}^{N_Y} \log p(Y_i |\theta)]$, where $p(Y_i|\theta)$ is the likelihood function and $p(\theta)$ is the prior density. 
In a classical optimization context, $-\log p(Y_i|\theta)$ would correspond to the data-loss and $-\log p(\theta)$ would correspond to a regularization term. Throughout this study, we will assume that the problem \eqref{eqn:optim_prob} has a unique solution in $\mathds{R}^d$.

We define a stochastic gradient $\nabla \tilde{U}(\theta)$, that is an unbiased estimator of $\nabla U$, as follows:
$\nabla \tilde{U}(\theta) = -[\nabla \log p(\theta) + \frac{N_Y}{N_\Omega} \sum_{i \in \Omega} \nabla \log p(Y_i |\theta)] $,
where $\Omega \subset \{1,\dots,N_Y\}$ denotes a random data subsample that is drawn with replacement, $N_\Omega = |\Omega|$ is the cardinality of $\Omega$. In the sequel, we will occasionally use the notation $\nabla \tilde{U}_n$ and $\nabla \tilde{U}_\Omega$ to denote the stochastic gradient computed at iteration $n$ of a given algorithm, or on a specific data subsample $\Omega$, respectively.

\textbf{The L-BFGS algorithm: }
The L-BFGS algorithm iteratively applies the following equation in order to find the MAP estimate given in \eqref{eqn:optim_prob}: 
\begin{align} 
\theta_n = \theta_{n-1} - h H_n \nabla {U}(\theta_{n-1}) \label{eqn:qn}
\end{align}
where $n$ denotes the iterations. Here, $H_n$ is an approximation to the inverse Hessian at $\theta_{n-1}$ and is computed by using the $M$ past values of the `iterate differences' $s_n \triangleq \theta_{n} - \theta_{n-1}$, and `gradient differences' $y_n \triangleq \nabla {U}(\theta_{n}) - \nabla {U}(\theta_{n-1})$. The collection of the iterate and gradient differences is called the \emph{L-BFGS memory}.
The matrix-vector product $H_n \nabla U(\theta_{n-1})$ is often implemented by using the \emph{two-loop recursion} \cite{nocedal}, which has linear time and space complexities ${\cal O}(Md)$.

In order to achieve computational scalability, stochastic L-BFGS algorithms replace $\nabla U$ with $\nabla \tilde{U}$. This turns out to be problematic, since the gradient differences $y_n$ would be \emph{inconsistent}, meaning that the stochastic gradients in different iterations will be computed on different data subsamples, i.e.\ $\Omega_{n-1}$ and $\Omega_n$. On the other hand, in the presence of the stochastic gradients, L-BFGS is no longer guaranteed to produce positive definite approximations even in convex problems, therefore more considerations should be taken in order to make sure that $H_n$ is positive definite.

\textbf{Stochastic Gradient Markov Chain Monte Carlo: }
Along with the recent advances in MCMC techniques, diffusion-based algorithms have become increasingly popular due to their applicability in large-scale machine learning applications. These techniques, so called the Stochastic Gradient MCMC (SG-MCMC) algorithms, aim at generating samples from the posterior distribution $p(\theta|Y)$ as opposed to finding the MAP estimate, and have strong connections with stochastic optimization techniques \cite{dalalyan2017further}. In this line of work, Stochastic Gradient Langevin Dynamics (SGLD)  \cite{WelTeh2011a} is one of the pioneering algorithms and generates an approximate sample $\theta_n$ from $p(\theta|Y)$ by iteratively applying the following update equation: 
\begin{align} 
\theta_n = \theta_{n-1} - h \nabla \tilde{U}_n(\theta_{n-1}) + \sqrt{2h/\beta} Z_n \label{eqn:sgld}
\end{align}
where $h$ is the step-size and $\{Z_n\}_{n=1}^N$ is a collection of standard Gaussian random variables in $\mathds{R}^d$. 
Here, $\beta$ is called the \emph{inverse temperature}: it is fixed to $\beta = 1$ in vanilla SGLD and when $\beta \neq 1$ the algorithm is called `tempered'.
In an algorithmic sense, SGLD is identical to SGD, except that it injects a Gaussian noise at each iteration and it coincides with SGD when $\beta$ goes to infinity.

SGLD has been extended in several directions \cite{ma2015complete,chen2015convergence,simsekli2016stochastic,simsekli17a}. In \cite{simsekliICML2016}, we proposed an L-BFGS-based SGLD algorithm with ${\cal O}(M^2 d)$ computational complexity, which aimed to improve the convergence speed of the vanilla SGLD. %
We showed that a straightforward way of combining L-BFGS in SGLD would incur an undesired bias; however, the remedy to prevent this bias resulted in numerical instability, which would limit the applicability of the algorithm. 
In other recent studies, SGLD has also been extended to synchronous \cite{ahn2014distributed} and asynchronous \cite{chen2016stochastic,springenberg2016asynchronous} distributed MCMC settings.

SGLD can be seen as a discrete-time simulation of a continuous-time Markov process that is the solution of the following stochastic differential equation (SDE):
\begin{align}
d \theta_t = - \nabla U(\theta_t) dt + \sqrt{2/\beta} dW_t, \label{eqn:langevin}
\end{align}
where $W_t$ denotes the standard Brownian motion in $\mathds{R}^d$.
Under mild regularity conditions on $U$, the solution process $(\theta_t)_{t \geq 0}$ attains a unique stationary distribution with a density that is proportional to $\exp(-\beta U(\theta))$ \cite{Roberts03}. An important property of this distribution is that, as $\beta$ goes to infinity, this density concentrates around the global minimum of $U(\theta)$ \cite{hwang1980laplace,gelfand1991recursive}. Therefore, for large enough $\beta$, a random sample that is drawn for the stationary distribution of  $(\theta_t)_{t \geq 0}$ would be close to $\theta^\star$.
Due to this property, SG-MCMC methods have recently started drawing attention from the non-convex optimization community. \citet{chen2016bridging} developed an annealed SG-MCMC algorithm for non-convex optimization and it was recently extended by \citet{ye_cont_tempering_2017}. \citet{raginsky17a} and \citet{xu2017global} provided finite-time guarantees for SGLD to find an `approximate' global minimizer that is close to $\theta^\star$, which imply that the additive Gaussian noise in SGLD can help the algorithm escape from poor local minima.
In a complementary study, \citet{zhang17b} showed that SGLD enters a neighborhood of a local minimum of $U(\theta)$ in polynomial time, which shows that even if SGLD fails to find the global optimum, it will still find a point that is close to one of the local optima. 
Even though these results showed that SG-MCMC is promising for optimization, it is still not clear how an asynchronous stochastic L-BFGS method could be developed within an SG-MCMC framework.

\section{Asynchronous Stochastic L-BFGS}

In this section, we propose a novel asynchronous L-BFGS-based (tempered) SG-MCMC algorithm that aims to provide an approximate optimum that is close to $\theta^\star$ by generating samples from a distribution that has a density that is proportional to $\exp(-\beta U(\theta))$.
We call the proposed algorithm asynchronous parallel stochastic L-BFGS (as-L-BFGS). Our method is suitable for both distributed and shared-memory settings. We will describe the algorithm only for the distributed setting; the shared-memory version is almost identical to the distributed version as long as the updates are ensured to be \emph{atomic}.

We consider a classical asynchronous optimization architecture, which is composed of a \emph{master node}, several \emph{worker nodes}, and a \emph{data server}. The main task of the master node is to maintain the newest iterate of the algorithm. At each iteration, the master node receives an \emph{additive update vector} from a worker node, it adds this vector to the current iterate in order to obtain the next iterate, and then it sends the new iterate to the worker node which has sent the update vector. On the other hand, the worker nodes work in a completely asynchronous manner. A worker node receives the iterate from the master node, computes an update vector, and sends the update vector to the master node. However, since the iterate would be possibly modified by another worker node which runs asynchronously in the mean time, the update vector that is sent to the server will thus be computed on an \emph{old} iterate, which causes both practical and theoretical challenges. Such updates are aptly called `delayed' or `stale'. The full data is kept in the data server and we assume that all the workers have access to the data server.

The proposed algorithm iteratively applies the following update equations in the master node:
\begin{align}
u_{n+1} = u_n + \Delta u_{n+1}, \qquad
\theta_{n+1} = \theta_{n} + \Delta \theta_{n+1} , \label{eqn:update_th_ult}   
\end{align}
where $n$ is the iteration index, $u_n$ is called the \emph{momentum} variable, and $\Delta u_{n+1}$ and $ \Delta \theta_{n+1}$ are the update vectors that are computed by the worker nodes. A worker node runs the following equations in order to compute the update vectors:
\begin{align}
 \nonumber \Delta u_{n+1} \triangleq &- h' H_{n+1}(\theta_{n-l_n}) \nabla \tilde{U}_{n+1}(\theta_{n-l_n}) - \gamma' u_{n-l_n} \\ 
 &+ \sqrt{2h'\gamma'/\beta  } Z_{n+1},  \label{eqn:delta_u} \\
   \Delta \theta_{n+1} \triangleq &  \>  H_{n+1}(\theta_{n-l_n}) u_{n-l_n}, \label{eqn:delta_th}
\end{align}
where $h'$ is the step-size, $\gamma'>0$ is the \emph{friction} parameter that determines the weight of the momentum, $\beta$ is the inverse temperature, $\{Z_n\}_n$ denotes standard Gaussian random variables, and $H_n$ denotes the L-BFGS matrix at iteration $n$. Here, $l_n \geq 0$ denotes the `staleness' of a particular update and measures the delay between the current update and the up-to-date iterate that is stored in the master node. 
We assume that the delays are bounded, i.e.\ $\max_n l_n \leq l_\text{max} < \infty$. Note that the matrix-vector products have ${\cal O}(Md)$ time-space complexity.

\setlength{\textfloatsep}{5pt}
 \begin{algorithm2e} [t]
 \SetInd{0.000ex}{1.5ex}
 \DontPrintSemicolon
 \SetKwInOut{Input}{input}
 \Input{$\theta_0$, $u_0$}
 {\color{purple} \small \tcp{Global iteration index}}
 $n \leftarrow 0$\\
 Send $(\theta_0,u_0)$ to all the workers $w = 1,\dots,W$  \\
 \While{$n < N$}{
    Receive $(\Delta \theta_{n+1}, \Delta u_{n+1})$ from worker $w$ \vspace{2pt}\\
    {\color{purple} \small \tcp{Generate the new iterates}}
    $u_{n+1} = u_n + \Delta u_{n+1}, \>\>\> \theta_{n+1} = \theta_{n} + \Delta \theta_{n+1}$ \vspace{2pt}\\
    Send the iterates $(\theta_{n+1}, u_{n+1})$ to worker $w$ \vspace{2pt} \\
    Set $n \leftarrow n+1$
 }
 \caption{as-L-BFGS: Master node}
 \label{algo:master}
 \end{algorithm2e}

Due to the asynchrony, the stochastic gradients and the L-BFGS matrices will be computed on the delayed variables $\theta_{n-l_n}$ and $u_{n-l_n}$. 
As opposed to the asynchronous stochastic gradient algorithms, where the main difficulty stems from the delayed gradients, our algorithm faces further challenges since it is not straightforward to obtain the gradient and iterate differences that are required for the L-BFGS computations in an asynchronously parallel setting.

We propose the following approach for the computation of the L-BFGS matrices. As opposed to the mb-L-BFGS algorithm, which uses a central L-BFGS memory (i.e.\ the collection of the gradient and iterate differences) that is stored in the master node, we let each worker have their own local L-BFGS memories since the master node would not be able to keep track of the gradient and iterate differences, which are received in an asynchronous manner. In our strategy, each worker updates its own L-BFGS memory right after sending the update vector to the master node. The overall algorithm is illustrated in Algorithms~\ref{algo:master} and \ref{algo:worker} ($W$ denotes the number of workers).
 
In order to be able to have consistent gradient differences, each worker applies a multi-batch subsampling strategy that is similar to mb-L-BFGS. We divide the data subsample into two subsets, i.e.\ $\Omega_n = \{ S_n, O_n \}$ with $N_S \triangleq |S_n|$, $N_O \triangleq |O_n|$, and $N_\Omega = N_S + N_O $. Here the main idea is to choose $N_S \gg N_O$ and use $O_n$ as an overlapping subset for the gradient differences. In this manner, in addition to the gradients that are computed on $S_n$ and $O_n$, we also perform an extra gradient computation on the previous overlapping subset, at the end of each iteration. As $N_O$ will be small, this extra cost will not be significant.
Finally, in order to ensure the L-BFGS matrices are positive definite, we use a `cautious' update mechanism that is useful for non-convex settings \cite{li2001global,zhang2011quasi,berahas2016multi} as shown in Algorithm~\ref{algo:worker}.

 \begin{algorithm2e} [t]
 \SetInd{0.0ex}{1.5ex}
 \DontPrintSemicolon
 \SetKwInOut{Input}{input}
 \Input{$M$, $\gamma$, $ N_S $, $N_O$ ($N_\Omega = N_S + N_O$)}
 {\color{purple} \small \tcp{Local iteration index}}
 $i \leftarrow 0$ \\ 
 \While{the master node is running}{
    Receive  $(\theta_{n-l_n}, u_{n-l_n})$ from the master \vspace{2pt} \\
    Draw a subsample $\Omega_{n+1} = \{S_{n+1}, O_{n+1}\} $ \vspace{2pt} \\ 
    {\color{purple} \small \tcp{Gradient computation}}
    $\nabla \tilde{U}_{n+1} (\theta_{n-l_n}) = \frac{N_O}{N_\Omega}\nabla \tilde{U}_{O_{n+1}}(\theta_{n-l_n}) + \frac{N_S}{N_\Omega} \nabla \tilde{U}_{S_{n+1}}(\theta_{n-l_n})$ \vspace{4pt}\\
    Compute $(\Delta \theta_{n+1}, \Delta u_{n+1})$ by \eqref{eqn:delta_u} and \eqref{eqn:delta_th} \vspace{2pt} \\ 
    Send $(\Delta \theta_{n+1}, \Delta u_{n+1})$ to the master \vspace{2pt}\\
    {\color{purple} \small \tcp{Local variables for L-BFGS}}
    $\tilde{\theta}_{i} = \theta_{n-l_n}, \>\>\>\> \tilde{O}_{i} = O_{n+1}, \>\>\>\> \tilde{g}_{i} = \nabla \tilde{U}_{O_{n+1}}(\theta_{n-l_n})$\\
    \If{$i \geq 1$}{
    {\color{purple} \small \tcp{Compute the overlapping gradient}} \vspace{2pt}
    $g' = \nabla \tilde{U}_{\tilde{O}_{i-1}}(\tilde{\theta}_{i})$ \\
    {\color{purple} \small \tcp{Compute the L-BFGS variables}}
    $s_{i} = \tilde{\theta}_{i} - \tilde{\theta}_{i-1}, \>\>\> y_{i} = g' - \tilde{g}_{i-1}$ \\
    {\color{purple} \small \tcp{Cautious memory update}}
    Add $(s_i,y_i)$ to the L-BFGS memory only if  $\phantom{ sadhalskaaa} y_i^\top s_i \geq \epsilon \|s_i\|^2$ for some $\epsilon>0$
    }
    Set $i \leftarrow i+1$
 }
 \caption{as-L-BFGS: Worker node ($w$)}
 \label{algo:worker}
 \end{algorithm2e}

Note that, in addition to asynchrony, the proposed algorithm also extends the current stochastic L-BFGS methods by introducing momentum. This brings two critical practical features: (i) without the existence of the momentum variables, the injected Gaussian noise must depend on the L-BFGS matrices, as shown in \cite{simsekliICML2016}, which results in an algorithm with ${\cal O}(M^2d)$ time complexity whereas our algorithm has ${\cal O}(Md)$ time complexity,
(ii) the use of the momentum significantly repairs the numerical instabilities caused by the asynchronous updates, since $u_n$ inherently encapsulates a direction for $\theta_n$, which provides additional information to the algorithm besides the gradients and L-BFGS computations.
Furthermore, in a very recent study \cite{loizou2017momentum} the use of momentum variables has been shown to be useful in other second-order optimization methods.
On the other hand, despite their advantages, the momentum variable also drifts apart the proposed algorithm from the original L-BFGS formulation. However, even such approximate approaches have proven useful in various scenarios \cite{zhang2011quasi,fu2016quasi}. Also note that, when $\beta \rightarrow \infty$, $l_\text{max} = 0$, and $H_n(\theta) = I$ for all $n$, the algorithm coincides with SGD with momentum. A more detailed illustration is given in the supplementary document.

\section{Theoretical Analysis}

In this section, we will provide non-asymptotic guarantees for the proposed algorithm. Our analysis strategy is different from the conventional analysis approaches for stochastic optimization and makes use of tools from analysis of SDEs. 
In particular, we will first develop a continuous-time Markov process whose marginal stationary measure admits a density that is proportional to $\exp(-\beta U(\theta))$. Then we will show that \eqref{eqn:update_th_ult}-\eqref{eqn:delta_th} form an approximate Euler-Maruyama integrator that approximately simulates this continuous process in discrete-time. Finally, we will analyze this approximate numerical scheme and provide a non-asymptotic error bound. All the proofs are given in the supplementary document.

We start by considering the following stochastic dynamical system:
\begin{align}
\nonumber d p_t \hspace{-2pt}&=\hspace{-2pt} \Bigl[\frac1{\beta}\Gamma_{t}(\theta_t) -H_t(\theta_t) \nabla_\theta U(\theta_t) - \gamma p_t   \Bigr] dt \hspace{-2pt} + \hspace{-2pt} \sqrt{\frac{2\gamma}{\beta}  } dW_t\\ 
d \theta_t \hspace{-1pt}&=\hspace{-1pt} H_t(\theta_t) p_t dt 
  \label{eqn:sde}
\end{align}
where $p_t\in \mathds{R}^{d}$ is also called the \emph{momentum} variable, $H_t(\cdot)$ denotes the L-BFGS matrix at time $t$ and $\Gamma_{t}(\cdot)$ is a vector that is defined as follows:
\begin{align}
\Bigl[\Gamma_{t}(\theta)\Bigr]_i \triangleq \sum_{j=1}^d \frac{\partial [H_t(\theta)]_{ij} }{\partial [\theta]_j}, \label{eqn:gamma_term}
\end{align}
where $[v]_i$ denotes the $i^\text{th}$ component of a vector $v$ and similarly $[M]_{ij}$ denotes a single element of a matrix $M$.

In order to analyze the invariant measure of the SDE defined in \eqref{eqn:sde}, we need certain conditions to hold. First,
we have two regularity assumptions on $U$ and $H_t$:
\begin{assumption}
The gradient of the potential is Lipschitz continuous, i.e. $\| \nabla_\theta U (\theta) - \nabla_\theta U (\theta') \| \leq L \| \theta - \theta'\|, \> \forall \theta, \theta' \in \mathds{R}^d$.
\label{asmp:lipschitz}
\end{assumption}
\begin{assumption}
The L-BFGS matrices have bounded second-order derivatives and they are Lipschitz continuous, i.e. $\Vert H_t(\theta) - H_t(\theta')\Vert \leq L_H\Vert\theta - \theta'\Vert, \> \forall \theta, \theta' \in \mathds{R}^d , t\geq 0$.
\label{asmp:H_lipschitz}
\end{assumption}
The assumptions \Cref{asmp:lipschitz} and \Cref{asmp:H_lipschitz} are standard conditions in analysis of SDEs \cite{duan} and similar assumptions have also been considered in stochastic gradient \cite{moulines2011non} and stochastic L-BFGS algorithms \cite{zhou2017stochastic}. Besides, \Cref{asmp:H_lipschitz} provides a direct control on the partial derivatives of $H_t$, which will be useful for analyzing the overall numerical scheme.
We now present our first result that establishes the invariant measure of the SDE \eqref{eqn:sde}. 
\begin{prop}
\label{prop:inv_meas_simple}
Assume that the conditions \Cref{asmp:lipschitz,asmp:H_lipschitz} hold. Let $\x_t = [\theta_t^\top,p_t^\top]^\top \in \mathds{R}^{2d}$ and $(\x_t)_{t\geq 0}$ be a Markov process that is a solution of the SDE given in \eqref{eqn:sde}. Then $(\x_t)_{t\geq 0}$ has a unique invariant measure $\pi$ that admits a density $\rho(\x) \propto \exp(-{\cal E}(\x))$ with respect to the Lebesgue measure, where ${\cal E}$ is an energy function on the extended state space and is defined as:
${\cal E}(\x) \triangleq \beta U(\theta) + \frac{\beta}{2} p^\top p$.
\end{prop}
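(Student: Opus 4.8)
The plan is to verify directly that the proposed density $\rho(\x) \propto \exp(-\mathcal{E}(\x))$ with $\mathcal{E}(\x) = \beta U(\theta) + \frac{\beta}{2} p^\top p$ is a stationary solution of the Fokker--Planck (forward Kolmogorov) equation associated with \eqref{eqn:sde}, i.e.\ to show $\mathcal{L}^*\rho = 0$, where $\mathcal{L}^*$ is the formal adjoint of the infinitesimal generator of $(\x_t)_{t\geq 0}$. Writing the drift as $b(\x) = (b_\theta, b_p)$ with $b_\theta = H_t(\theta) p$ and $b_p = \frac{1}{\beta}\Gamma_t(\theta) - H_t(\theta)\nabla_\theta U(\theta) - \gamma p$, and noting that noise enters only through the $p$-block (so that the diffusion matrix is the constant, degenerate $\mathrm{diag}(0, \frac{2\gamma}{\beta} I_d)$), the operator takes the form $\mathcal{L}^*\rho = -\nabla_\theta\cdot(b_\theta \rho) - \nabla_p\cdot(b_p \rho) + \frac{\gamma}{\beta}\Delta_p \rho$. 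I would compute these three pieces separately, using $\nabla_\theta \rho = -\beta (\nabla_\theta U)\rho$ and $\nabla_p \rho = -\beta p\,\rho$, which follow from the explicit Gaussian-in-$p$ form of $\rho$.

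The crux of the computation is a set of cancellations resting on two structural facts. First, because the L-BFGS matrix $H_t$ is symmetric, differentiating the transport term $H_t(\theta)p$ in $\theta$ generates a contribution $(\Gamma_t(\theta)\cdot p)\,\rho$, with $\Gamma_t$ exactly the divergence-of-$H$ vector defined in \eqref{eqn:gamma_term}; this is precisely annihilated by the correction drift $\frac{1}{\beta}\Gamma_t(\theta)$ appearing in $b_p$, once one accounts for the factor $\beta$ coming from $\nabla_p\rho$. Second, the friction $-\gamma p$ contributes $+\gamma d\,\rho - \gamma\beta\|p\|^2\rho$, while the $p$-Laplacian contributes the counterpart $-\gamma d\,\rho + \gamma\beta\|p\|^2\rho$, reproducing the standard Ornstein--Uhlenbeck balance; meanwhile the remaining gradient terms $\pm\beta\,(H_t\nabla_\theta U)\cdot p\,\rho$ cancel between the $\theta$- and $p$-transport pieces. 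Summing the three contributions yields $\mathcal{L}^*\rho = 0$. Equivalently, one can frame this via the complete-recipe template of \cite{ma2015complete}, writing $b = -(D+Q)\nabla\mathcal{E} + \Gamma_{\mathrm{corr}}$ with positive semidefinite $D = \mathrm{diag}(0,\frac{\gamma}{\beta}I_d)$, skew-symmetric $Q = \frac{1}{\beta}\begin{pmatrix} 0 & -H_t \\ H_t & 0\end{pmatrix}$ (skew-symmetry uses $H_t = H_t^\top$), and $[\Gamma_{\mathrm{corr}}]_i = \sum_j \partial_j (D+Q)_{ij}$, whence invariance of $\exp(-\mathcal{E})$ is immediate.

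The harder part is \emph{uniqueness}, which does not follow from the Fokker--Planck identity alone. Since the diffusion is degenerate (noise acts only on $p$), uniform-ellipticity arguments do not apply, and I would instead establish hypoellipticity by verifying H\"{o}rmander's bracket condition: the constant noise fields in the $p$-directions, together with their Lie brackets against the drift --- which through $b_\theta = H_t(\theta)p$ and the nondegeneracy of $H_t$ propagate into the $\theta$-directions --- span $\mathds{R}^{2d}$, giving a smooth transition density and irreducibility. To upgrade this to a unique invariant measure I would exhibit a Lyapunov function, taking $\mathcal{E}$ itself (or a mild modification), and use \Cref{asmp:lipschitz,asmp:H_lipschitz} to obtain a geometric drift condition that controls the growth of the coefficients and precludes explosion, so that the process is well-posed and positive Harris recurrent; existence and uniqueness of $\pi$ then follow from standard ergodic theory for such SDEs. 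I expect the main obstacle to be verifying the bracket condition under only the stated regularity of $H_t$ and ensuring the chosen Lyapunov function genuinely dominates the $\Gamma_t$ correction term.
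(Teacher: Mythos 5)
Your proposal is correct in substance, and for the stationarity half it is essentially the paper's argument made explicit: the paper rewrites \eqref{eqn:sde} in the form $d\x_t = \{-(\mathbf{D}+\mathbf{Q}_t(\x_t))\nabla_{\x}{\cal E}(\x_t) + \mathbf{\Gamma}_t(\x_t)\}\,dt + \sqrt{2\mathbf{D}}\,dW_t$ with exactly your $\mathbf{D}=\mathrm{diag}(0,\tfrac{\gamma}{\beta}I)$, skew-symmetric $\mathbf{Q}$, and divergence vector $\mathbf{\Gamma}$, and then invokes Theorem~1 of \citep{ma2015complete}; the Fokker--Planck cancellations you carry out by hand (and which check out, including the role of symmetry of $H_t$ in matching \eqref{eqn:gamma_term}) are precisely what that theorem packages. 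Where you genuinely diverge is uniqueness: the paper disposes of it by noting that \Cref{asmp:lipschitz,asmp:H_lipschitz} make the drift locally Lipschitz and citing Proposition~4.2.2 of a reference on Markov semigroups, whereas you sketch a self-contained route via H\"{o}rmander hypoellipticity plus a Lyapunov/Harris argument. Your route is the standard one for kinetic Langevin dynamics and buys an actual argument rather than a pointer, but two caveats (which you partly flag yourself) deserve emphasis: (i) the classical H\"{o}rmander theorem requires smooth coefficients and your bracket computation needs $H_t$ nondegenerate, neither of which follows from \Cref{asmp:H_lipschitz} alone (positive definiteness is enforced algorithmically by the cautious update, not by the stated hypotheses); and (ii) ${\cal E}$ itself is not a Lyapunov function here, since ${\cal L}{\cal E}$ dissipates only in $p$, so the ``mild modification'' must be the usual cross term ${\cal E}+\epsilon\,\theta^\top p$, and one additionally needs coercivity of $U$ (integrability of $\exp(-\beta U)$) for an invariant \emph{probability} measure to exist at all --- a point the paper's own proof also leaves implicit.
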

This result shows that, if the SDE \eqref{eqn:sde} could be exactly simulated, the \emph{marginal} distribution of the samples $\theta_t$ would converge to a measure $\pi_\theta$ which has a density that is proportional to $\exp(-\beta U(\theta))$. Therefore, for large enough $\beta$ and $t$, $\theta_t$ would be close to the global optimum $\theta^\star$.

We note that when $\beta = 1$, the SDE \eqref{eqn:sde} shares similarities with the SDEs presented in \citep{fu2016quasi, ma2015complete}. While the main difference being the usage of the tempering scheme, \cite{fu2016quasi} further differs from our approach as it directly discard the term $\Gamma_t$ since is in a Metropolis-Hastings framework, which is not adequate for large-scale applications. On the other hand, the stochastic gradient Riemannian Hamiltonian Monte Carlo algorithm given in \citep{ma2015complete}, chooses $H_t$ as the Fisher information matrix; a quantity that requires ${\cal O}(d^2)$ space-time complexity and is not analytically available in general.

We will now show that the proposed algorithm \eqref{eqn:update_th_ult}-\eqref{eqn:delta_th} form an approximate method for simulating \eqref{eqn:sde} in discrete-time. For illustration, we first consider the Euler-Maruyama integrator for \eqref{eqn:sde}, given as follows: 
\begin{align}
\nonumber p_{n+1} =& \> p_n -h H_n(\theta_n) \nabla_\theta U(\theta_n) - h \gamma p_n + \frac{h}{\beta} \Gamma_{n}(\theta_n) \\
  &+ \sqrt{2h\gamma/\beta  } Z_{n+1}, \label{eqn:euler_simple} \\
  \theta_{n+1} =& \> \theta_{n} + h H_n(\theta_n) p_n. \label{eqn:euler_th1} 
\end{align}
Here, the term $(1/\beta) \Gamma_n$ introduces an additional computational burden and its importance is very insignificant (i.e.\ its magnitude is of order ${\cal O}(1/\beta)$ due to \Cref{asmp:H_lipschitz}). Therefore, we discard $\Gamma_n$, define $u_n \triangleq h p_n $, $\gamma' \triangleq h\gamma$, $h' \triangleq h^2$, and use these quantities in \eqref{eqn:euler_simple} and \eqref{eqn:euler_th1}. We then obtain the following re-parametrized Euler integrator:
\begin{align*}
 u_{n+1} \hspace{-3pt}=\hspace{-0pt} & u_n \hspace{-2pt}-\hspace{-2pt} h' H_n(\theta_{n}) \nabla_\theta U(\theta_{n}) \hspace{-2pt}-\hspace{-2pt} \gamma' u_{n} \hspace{-2pt}+\hspace{-2pt} \sqrt{2h'\gamma'/\beta  } Z_{n+1}\\ 
 \theta_{n+1} \hspace{-3pt}=&\hspace{-0pt} \theta_{n} \hspace{-2pt}+\hspace{-2pt} H_n(\theta_n) u_{n}
 \end{align*} 
The detailed derivation is given in the supplementary document. Finally, we replace $\nabla U$ with the stochastic gradients, replace the variables $\theta_n$ and $u_n$ with stale variables $\theta_{n-l_n}$ and $p_{n-l_n}$ in the update vectors, and obtain the ultimate update equations, given in \eqref{eqn:update_th_ult}. 
Note that, due to the negligence of $\Gamma_n$, the proposed approach would require a large $\beta$ and would not be suitable for classical posterior sampling settings, where $\beta =1$.

In this section, we will analyze the \emph{ergodic} error $\mathds{E}[\hat{U}_N - U^\star]$, where we define $\hat{U}_N \triangleq (1/N) \sum_{n=1}^N U(\theta_n)$ and $U^\star \triangleq U(\theta^\star)$. This error resembles the bias of a statistical estimator; however, as opposed to the bias, it directly measures the expected discrepancy to the global optimum. Similar ergodic error notions have been considered in the analysis of non-convex optimization methods \cite{lian2015asynchronous,chen2016bridging,berahas2016multi}.

In our proof strategy, we decompose the error into two terms: $\mathds{E}[\hat{U}_N - U^\star] = {\cal A}_1+ {\cal A}_2$, where ${\cal A}_1 \triangleq \mathds{E}[\hat{U}_N - \bar{U}_\beta]$  ${\cal A}_2 \triangleq [\bar{U}_\beta - U^\star] \geq 0$, and $\bar{U}_{\beta} \triangleq \int_{\mathds{R}^d} U(\theta) \pi_{\theta}(d\theta) $. We then upper-bound these terms separately.

The term ${\cal A}_1$ turns out to be the bias of a statistical estimator, which we can analyze by using ideas from recent SG-MCMC studies. However, existing tools cannot be directly used because of the additional difficulties introduced by the L-BFGS matrices. 
In order to bound ${\cal A}_1$, we first require the following smoothness and boundedness condition.
\begin{assumption}
Let $\psi$ be a functional that is the unique solution of a Poisson equation that is defined as follows:
\begin{align}
\Lo_n \psi (\x_n) = U(\theta_n) - \bar{U}_\beta, \label{eqn:poisson_eq}
\end{align}
where $\x_n = [\theta_n^\top,p_n^\top]^\top$, $\Lo_n$ is the generator of \eqref{eqn:sde} at $t=nh$ and is formally defined in the supplementary document.
The functional $\psi$ and its up to third-order derivatives $\D^k \psi$ are bounded by a function $V(\x)$, such that $\|\D^k \psi\| \leq C_k V^{r_k}$ for $k = 0,1,2,3$ and $C_k, r_k>0$. Furthermore, $\sup_n \mathds{E} V^r(\x_n) < \infty$ and $V$ is smooth such that $\sup_{s \in (0,1)} V^r(s\x + (1-s) \x') \leq C(V^r(\x) + V^r(\x'))$ for all $\x,\x' \in \mathds{R}^{2d}$, $r \leq \max{2r_k}$, and $C  >0$.
\label{asmp:poisson}
\end{assumption}
Assumption \Cref{asmp:poisson} is also standard in SDE analysis and SG-MCMC \cite{mattingly2010convergence,teh2016consistency,chen2015convergence,durmus2016stochastic} and gives us control over the weak error of the numerical integrator. We further require the following regularity conditions in order to have control over the error induced by the delayed stochastic gradients.
\begin{assumption}
The variance of the stochastic gradients is bounded, i.e. $\mathds{E} \| \nabla_\theta U(\theta) - \nabla_\theta \tilde{U}(\theta) \|^2 \leq \sigma$ for some $0 < \sigma <\infty $.
\label{asmp:vargrad}
\end{assumption}
\begin{assumption} 
\label{asmp:Taylor expansion}
For a smooth and bounded function $f$, the remainder $r_{\Lo_n,f}(\cdot)$ in the following Taylor expansion is bounded:
\begin{align}
e^{h\Lo_n}f(\x) = f(\x) + h\Lo_n f(\x) + h^2 r_{\Lo_n,f}(\x).
\end{align}
\end{assumption}
The following lemma presents an upper-bound for ${\cal A}_1$.
\begin{lemma}
\label{lem:euler}
Assume the conditions \Cref{asmp:lipschitz}-\ref{asmp:Taylor expansion} hold. We have the following bound for the bias:
\begin{align}
\bigl|\mathds{E}[\hat{U}_N - \bar{U}_\beta] \bigr| = \Oc \Bigl( \frac{1}{Nh} + \max(\stl_{\text{max}},1)h + \frac{1}{\beta}\Bigr). \label{eqn:bound_bias}
\end{align}
\end{lemma}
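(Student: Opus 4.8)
The plan is to exploit the Poisson equation of \Cref{asmp:poisson}. Since $\Lo_n\psi(\x_n) = U(\theta_n)-\bar U_\beta$, I would first rewrite the bias exactly as
\begin{align*}
\E[\hat U_N - \bar U_\beta] = \frac1N\sum_{n=1}^N \E[\Lo_n\psi(\x_n)],
\end{align*}
and then replace each generator term by the one-step increment of $\psi$ along the actual (stale, stochastic, $\Gamma$-free) numerical scheme so that the dominant part telescopes. Concretely, let $\tilde\Lo_n$ be the operator governing one step of \eqref{eqn:update_th_ult}--\eqref{eqn:delta_th}, defined through the Taylor expansion of the one-step transition operator in \Cref{asmp:Taylor expansion}: $\E_n[\psi(\x_{n+1})] = \psi(\x_n) + h\tilde\Lo_n\psi(\x_n) + h^2 r_{\tilde\Lo_n,\psi}(\x_n)$, where $\E_n$ is the conditional expectation given the history up to step $n$ (including the stale states). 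Writing $\Lo_n\psi = \tilde\Lo_n\psi + (\Lo_n-\tilde\Lo_n)\psi$ and substituting yields three groups of terms after summation.

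First, summing the telescoping increments $\tilde\Lo_n\psi(\x_n) = \frac1h(\E_n[\psi(\x_{n+1})]-\psi(\x_n)) - h\,r_{\tilde\Lo_n,\psi}(\x_n)$ gives a leading contribution $\frac1{Nh}\E[\psi(\x_{N+1})-\psi(\x_1)]$, which is $\Oc(1/(Nh))$ because $\|\psi\|\le C_0 V^{r_0}$ and $\sup_n\E V^{r}(\x_n)<\infty$ by \Cref{asmp:poisson}. Second, the Taylor remainders contribute $\frac hN\sum_n\E[r_{\tilde\Lo_n,\psi}(\x_n)]=\Oc(h)$, again using the boundedness of the remainder from \Cref{asmp:Taylor expansion} together with the uniform moment bound on $V$. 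The remaining and principal work is to bound the generator-mismatch term $\frac1N\sum_n\E[(\Lo_n-\tilde\Lo_n)\psi(\x_n)]$.

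The difference $\Lo_n-\tilde\Lo_n$ acts through the drift of the momentum and position components and splits into three physically distinct pieces. (i) The discarded correction $\frac1\beta\Gamma_n(\theta_n)\cdot\nabla_p\psi$: since $\Gamma_n$ is bounded by \Cref{asmp:H_lipschitz} and $\|\nabla_p\psi\|\le C_1 V^{r_1}$ with $\sup_n\E V^{r_1}<\infty$, this piece is $\Oc(1/\beta)$. (ii) The stochastic-gradient replacement: writing $\nabla\tilde U(\theta_{n-l_n}) = \nabla U(\theta_{n-l_n}) + \xi_n$ with $\E[\xi_n\mid\mathcal F_{n-l_n}]=0$ and $\E\|\xi_n\|^2\le\sigma$ (\Cref{asmp:vargrad}), the zero-mean part vanishes in expectation and the variance only feeds second-order diffusion corrections, so its net contribution is absorbed into $\Oc(h)$. (iii) The staleness: after conditioning, the residual mismatch takes the form $H(\theta_{n-l_n})\nabla U(\theta_{n-l_n}) - H(\theta_n)\nabla U(\theta_n)$ together with the analogous momentum and friction terms $H(\theta_{n-l_n})p_{n-l_n}-H(\theta_n)p_n$ and $\gamma(p_n-p_{n-l_n})$. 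Using the Lipschitz bounds of \Cref{asmp:lipschitz,asmp:H_lipschitz}, the boundedness of $H_n$ ensured by the cautious update, and the telescoping estimate $\|\theta_n-\theta_{n-l_n}\|\le\sum_{k=n-l_n}^{n-1}\|\Delta\theta_{k+1}\| = \Oc(l_n h)$ (and the analogue for $p$), each piece is $\Oc(\stl_{\text{max}} h)$. Collecting the three groups gives $\Oc(1/(Nh)) + \Oc(h) + \Oc(1/\beta) + \Oc(\stl_{\text{max}}h)$, which is exactly \eqref{eqn:bound_bias} after merging the two $h$-terms into $\max(\stl_{\text{max}},1)h$.

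The hard part will be controlling piece (iii): the staleness bound needs uniform-in-$n$ control of the one-step increments $\|\Delta\theta_{k+1}\|$ and $\|p_{k+1}-p_k\|$ across the whole delay window, which in turn requires uniform boundedness of the L-BFGS matrices $H_k$ and of the stochastic momentum, followed by a transfer from these pathwise bounds to bounds in expectation through the Lyapunov function $V$ and the derivative bounds of \Cref{asmp:poisson}. A second delicate point is ensuring that the zero-mean gradient noise genuinely drops out rather than coupling to $\nabla_p\psi(\x_n)$ through shared randomness; this forces a careful choice of the conditioning filtration ($\mathcal F_{n-l_n}$) and is the most sensitive measurability step of the argument.
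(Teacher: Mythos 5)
Your proposal follows essentially the same route as the paper's proof: the paper likewise writes the one-step transition as $(\mathds{I}+h\tilde{\Lo}_n)\psi+\Oc(h^2)$ with $\tilde{\Lo}_n=\Lo_n-\Delta V_n$, telescopes to get the $\Oc(1/(Nh))$ and $\Oc(h)$ terms, and bounds $\E[\Delta V_n\psi]$ by splitting it into the dropped $\Gamma_n$ term (giving $\Oc(1/\beta)$), the zero-mean stochastic-gradient noise (killed by the independence of the subsample $\Omega_n$ from the iterate history), and the staleness term (bounded by $\Oc(\stl_{\text{max}}h)$ via the Lipschitz assumptions and a one-step-increment estimate $\E[f_{i+1}(\x_{i+1})]\leq h\E[\Lo_{i+1}f_{i+1}(\x_i)]+\Oc(h^2)$, which is exactly the expectation-level version of the control you flag as the hard part). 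The two delicate points you identify at the end are precisely the ones the paper resolves in its Lemmas on $\E_{\bar{\x}_n}\|\nabla U(\theta_{n-\stl_n})-\nabla U(\theta_n)\|$ and on $\Delta V_n$, so your plan is sound and matches the published argument.
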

Here, the term $1/\beta$ in \eqref{eqn:bound_bias} appears due to the negligence of $\Gamma_n$.
In order to bound the second term ${\cal A}_2$, we follow a similar strategy to \cite{raginsky17a}, where we use \Cref{asmp:lipschitz} and the following moment condition on $\pi_\theta$.
\begin{assumption}
The second-order moments of $\pi_\theta$ are bounded and satisfies the following inequality:
$\int_{\mathds{R}^{d}} \|\theta\|^2 \pi_\theta (d \theta) \leq  \frac{C_\beta}{\beta},$
for some $C_\beta > \max ( \beta d/(2 \pi e), d e/L )$. 
\label{asmp:var}
\end{assumption}
This assumption is mild since $\pi_\theta$ concentrates around $\theta^\star$ as $\beta$ tends to infinity. The order $1/\beta$ is arbitrary, hence the assumption can be further relaxed. The following lemma establishes an upper-bound for ${\cal A}_2$. 
\begin{lemma}
\label{lem:entropy}
Under assumptions \Cref{asmp:lipschitz} and \ref{asmp:var}, the following bound holds: 
$\bar{U}_\beta - U^\star = {\cal O}(1/\beta)$.
\end{lemma}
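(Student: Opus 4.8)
The plan is to bound the two sides of $0 \le \bar{U}_\beta - U^\star$ separately, exploiting that $\pi_\theta$ is the Gibbs measure with density $\rho_\beta(\theta) = Z_\beta^{-1}\exp(-\beta U(\theta))$, where $Z_\beta = \int_{\mathds{R}^d}\exp(-\beta U(\theta))\,d\theta$. The lower bound $\bar{U}_\beta - U^\star \ge 0$ is immediate since $U(\theta)\ge U^\star$ pointwise, so all the work is in the matching upper bound. My starting point is the free-energy identity obtained by integrating $-\log\rho_\beta(\theta) = \beta U(\theta) + \log Z_\beta$ against $\pi_\theta$, namely $\beta\bar{U}_\beta = \mathcal{H}(\pi_\theta) - \log Z_\beta$, where $\mathcal{H}(\pi_\theta) \triangleq -\int_{\mathds{R}^d}\rho_\beta\log\rho_\beta\,d\theta$ is the differential entropy. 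It then suffices to upper-bound $\mathcal{H}(\pi_\theta)$ and to lower-bound $\log Z_\beta$.

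For the partition function I would use \Cref{asmp:lipschitz}: since $\theta^\star$ is the unique minimizer, $\nabla U(\theta^\star)=0$, and $L$-smoothness yields the quadratic majorant $U(\theta)\le U^\star + \tfrac{L}{2}\|\theta-\theta^\star\|^2$. Integrating the induced Gaussian lower bound on $\exp(-\beta U)$ gives $Z_\beta \ge \exp(-\beta U^\star)\,(2\pi/(\beta L))^{d/2}$, i.e.\ $-\log Z_\beta \le \beta U^\star + \tfrac{d}{2}\log(\beta L/(2\pi))$. For the entropy I would invoke the maximum-entropy principle: among all distributions on $\mathds{R}^d$ whose second moment about the origin is at most $m_2$, the differential entropy is maximized by the centered Gaussian with covariance $(m_2/d)I$, giving $\mathcal{H}\le \tfrac{d}{2}\log(2\pi e\, m_2/d)$. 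Applying this with $m_2 = C_\beta/\beta$ from \Cref{asmp:var} gives $\mathcal{H}(\pi_\theta)\le \tfrac{d}{2}\log(2\pi e C_\beta/(d\beta))$.

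Combining the two estimates in the identity and cancelling the $\beta U^\star$ and the $2\pi$ factors yields $\beta(\bar{U}_\beta - U^\star)\le \tfrac{d}{2}\log(eLC_\beta/d)$, hence $\bar{U}_\beta - U^\star \le \tfrac{d}{2\beta}\log(eLC_\beta/d) = {\cal O}(1/\beta)$; the lower bounds imposed on $C_\beta$ in \Cref{asmp:var} (namely $C_\beta > \beta d/(2\pi e)$ and $C_\beta > de/L$) are exactly what keep both the entropy term and the final logarithm well-defined and nonnegative. I expect two points to require the most care. First, the entropy step: differential entropy is neither monotone nor sign-definite, so one must argue that the Gaussian comparison is legitimate for an arbitrary mean (it uses only the trace of the covariance, and for fixed $\mathds{E}\|\theta\|^2$ centering at the origin maximizes that trace, by AM--GM on the eigenvalues). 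Second, and more delicate, is tracking the $\beta$-dependence hidden in $C_\beta$: since the admissible $C_\beta$ must exceed $\beta d/(2\pi e)$, the logarithm $\log(eLC_\beta/d)$ can itself grow like $\log\beta$, so verifying that this factor is absorbed into the stated ${\cal O}(1/\beta)$ rate (rather than degrading it to $\tilde{\cal O}(1/\beta)$) is the main obstacle and should be addressed explicitly when the constant is instantiated.
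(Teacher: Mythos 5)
Your proposal follows essentially the same route as the paper's proof (which adapts Proposition 11 of \citepNew{raginsky17a}): the free-energy identity $\beta\bar{U}_\beta = {\cal H}(\rho) - \log Z_\beta$, the maximum-entropy Gaussian bound on ${\cal H}(\rho)$ via \Cref{asmp:var}, and the quadratic majorant $U(\theta) \leq U^\star + \frac{L}{2}\|\theta-\theta^\star\|^2$ from \Cref{asmp:lipschitz} to lower-bound $\log Z_\beta$, arriving at the identical final estimate $\frac{d}{2\beta}\log(e L C_\beta/d)$. The $\beta$-dependence of $C_\beta$ that you flag at the end is a genuine subtlety, but the paper's own proof treats $C_\beta$ as a constant and does not address it either, so your argument matches the published one.
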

We now present our main result, which can be easily proven by combining Lemmas~\ref{lem:euler} and \ref{lem:entropy}.
\begin{thm}
Assume that the conditions \Cref{asmp:lipschitz}-\ref{asmp:var} hold. Then the ergodic error of the proposed algorithm is bounded as follows:
\begin{align}
\bigl| \mathds{E}\hat{U}_N - U^\star \bigr| = {\cal O}\Bigl( \frac{1}{Nh} + \max(1,\stl_{\text{max}})h + \frac{1}{\beta} \Bigr) . \label{eqn:thm} 
\end{align}
\label{thm:bias}
\vspace{-12pt}
\end{thm}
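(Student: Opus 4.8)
The plan is to obtain the bound directly from the error decomposition set up just before the statement, $\mathds{E}[\hat{U}_N - U^\star] = {\cal A}_1 + {\cal A}_2$, in which ${\cal A}_1 = \mathds{E}[\hat{U}_N - \bar{U}_\beta]$ is the statistical bias of the ergodic average relative to the marginal invariant measure $\pi_\theta$, and ${\cal A}_2 = \bar{U}_\beta - U^\star \geq 0$ is the gap between the mean of $U$ under $\pi_\theta$ and the global minimum. The essential point is that both constituent bounds are already available: Lemma~\ref{lem:euler} controls ${\cal A}_1$ and Lemma~\ref{lem:entropy} controls ${\cal A}_2$, so the theorem is a matter of assembling them rather than proving anything new.

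Concretely, I would first apply the triangle inequality to the decomposition to get $|\mathds{E}[\hat{U}_N - U^\star]| \leq |{\cal A}_1| + |{\cal A}_2|$; since ${\cal A}_2 \geq 0$ we have $|{\cal A}_2| = {\cal A}_2$, so nothing is lost in the second term. Next I would substitute the estimate of Lemma~\ref{lem:euler}, namely $|{\cal A}_1| = {\cal O}(1/(Nh) + \max(\stl_{\text{max}},1)h + 1/\beta)$, together with the estimate of Lemma~\ref{lem:entropy}, $\bar{U}_\beta - U^\star = {\cal O}(1/\beta)$. Finally I would merge the two $1/\beta$ contributions into a single ${\cal O}(1/\beta)$ term, which reproduces exactly the claimed rate ${\cal O}(1/(Nh) + \max(1,\stl_{\text{max}})h + 1/\beta)$.

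The subtlety worth flagging is conceptual rather than computational: the proof of the theorem itself is essentially immediate, and all of the genuine difficulty is deferred to Lemma~\ref{lem:euler}, whose argument must simultaneously handle the weak discretization error of the Euler--Maruyama scheme (through the Poisson-equation apparatus of \Cref{asmp:poisson}), the perturbation introduced by the stale variables $\theta_{n-l_n}, p_{n-l_n}$ (controlled via the Lipschitz condition \Cref{asmp:lipschitz} and the variance condition \Cref{asmp:vargrad}), and the bias from discarding the correction term $\Gamma_n$, which is precisely what generates the $1/\beta$ contribution. Because I am permitted to assume both lemmas, the theorem follows without further obstruction; the only place where I would anticipate real work is upstream, in checking that the presence of the L-BFGS matrices $H_n$ and the bounded delay $\stl_{\text{max}}$ do not invalidate the standard SG-MCMC weak-error estimates underlying Lemma~\ref{lem:euler}.
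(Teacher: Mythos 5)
Your proposal is correct and follows essentially the same route as the paper: the paper likewise writes $\bigl|\mathds{E}\hat{U}_N - U^\star\bigr| \leq |{\cal A}_1| + {\cal A}_2$ via the triangle inequality and then invokes Lemma~\ref{lem:euler} for ${\cal A}_1$ and Lemma~\ref{lem:entropy} for ${\cal A}_2$, merging the two ${\cal O}(1/\beta)$ contributions. Your observation that ${\cal A}_2 \geq 0$ makes the absolute value harmless, and that all the substantive work lives upstream in Lemma~\ref{lem:euler}, matches the paper's treatment exactly.
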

More explicit constants and a discussion on the relation of the theorem to other recent theoretical results are provided in the supplementary document. 

Theorem~\ref{thm:bias} provides a non-asymptotic guarantee for convergence to a point that is close to the global optimizer $\theta^\star$ even when $U$ is non-convex, thanks to the additive Gaussian noise. The bound suggests an optimal rate of convergence of ${\cal O}(1/\sqrt{N})$, which is in line with the current rates of the non-convex asynchronous algorithms \cite{lian2015asynchronous}.  
Furthermore, if we assume that the total number of iterations $N$ is a linear function of the number of workers, e.g.\ $N = N_W W $, where $N_W$ is the number of iterations executed by a single worker, Theorem~\ref{thm:bias} implies that, in the ideal case, the proposed algorithm can achieve a linear speedup with increasing $W$, provided that $\stl_{\text{max}} = {\cal O}(1/(N h^2))$.

Despite their nice theoretical properties, it is well-known that tempered sampling approaches also often get stuck near a local minimum. In our case, this behavior would be mainly due to the hidden constant in \eqref{eqn:bound_bias}, which can be exponential in dimension $d$, as illustrated in \cite{raginsky17a} for SGLD. On the other hand, Theorem~\ref{thm:bias} does not guarantee that the proposed algorithm will converge to a neighborhood of a local minimum; however, we believe that we can also prove local convergence guarantees by using the techniques provided in \cite{zhang17b,tzen2018local}, which we leave as a future work.

\begin{figure}[t]
\centering
\includegraphics[width=\columnwidth]{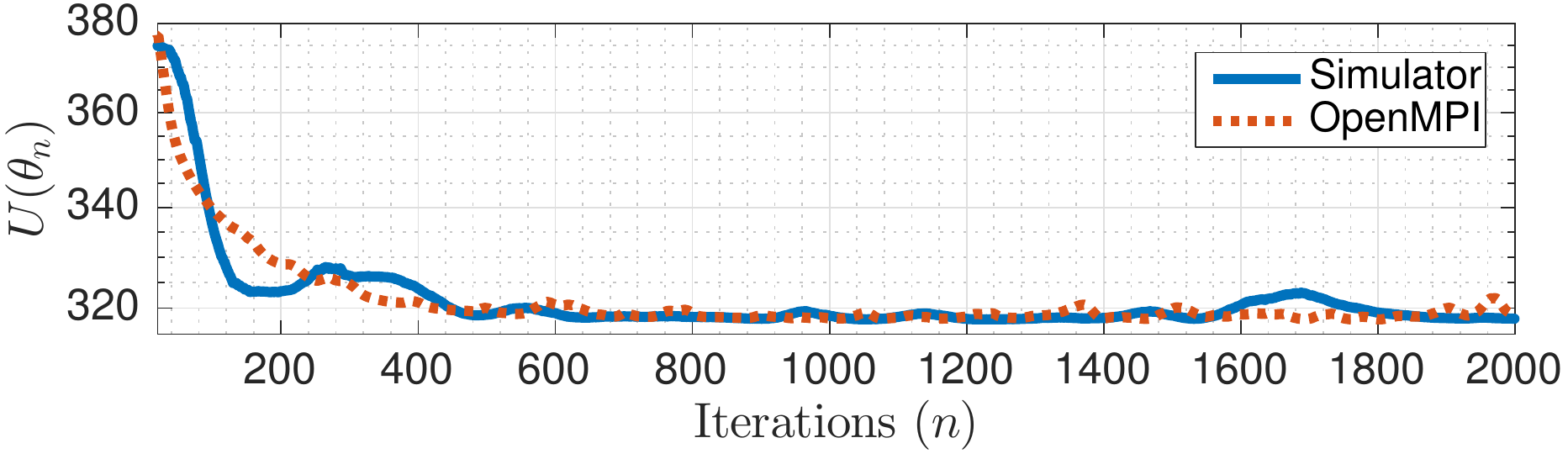}  
\vspace{-20pt}
\caption{The comparison of the simulated and the real implementation of as-L-BFGS with $W=40$ workers.}
\label{fig:simulvscpp}
\vspace{5pt}
\end{figure}

\section{Experiments}

The performance of asynchronous stochastic gradient methods has been evaluated in several studies, where the advantages and limitations have been illustrated in various scenarios, to name a few \cite{dean2012large,zhang2015deep,pmlr-v70-zheng17b}. In this study, we will explore the advantages of using L-BFGS in an asynchronous environment. In order to illustrate the advantages of asynchrony, we will compare as-L-BFGS with mb-L-BFGS \cite{berahas2016multi}; and in order to illustrate the advantages that are brought by using higher-order geometric information, we will compare as-L-BFGS to asynchronous SGD (a-SGD) \cite{lian2015asynchronous}. We will also explore the speedup behavior of as-L-BFGS for increasing $W$.

We conduct experiments on both synthetic and real datasets. For real data experiments, we have implemented all the three algorithms in C++ by using a low-level message passing protocol for parallelism, namely the OpenMPI library. This code can be used both in a distributed environment or a single computer with multiprocessors. For the experiments on synthetic data, we have implemented the algorithms in MATLAB, by developing a realistic \emph{discrete-event simulator}. This simulated environment is particularly useful for understanding the behaviors of the algorithms in detail since we can explicitly control the computation time that is spent at the master or worker nodes, and the communication time between the nodes. This simulation strategy also enables us to explicitly control the variation among the computational powers of the worker nodes; a feature that is much harder to control in real distributed environments. 

\begin{figure}[t]
\centering
\includegraphics[width=\columnwidth]{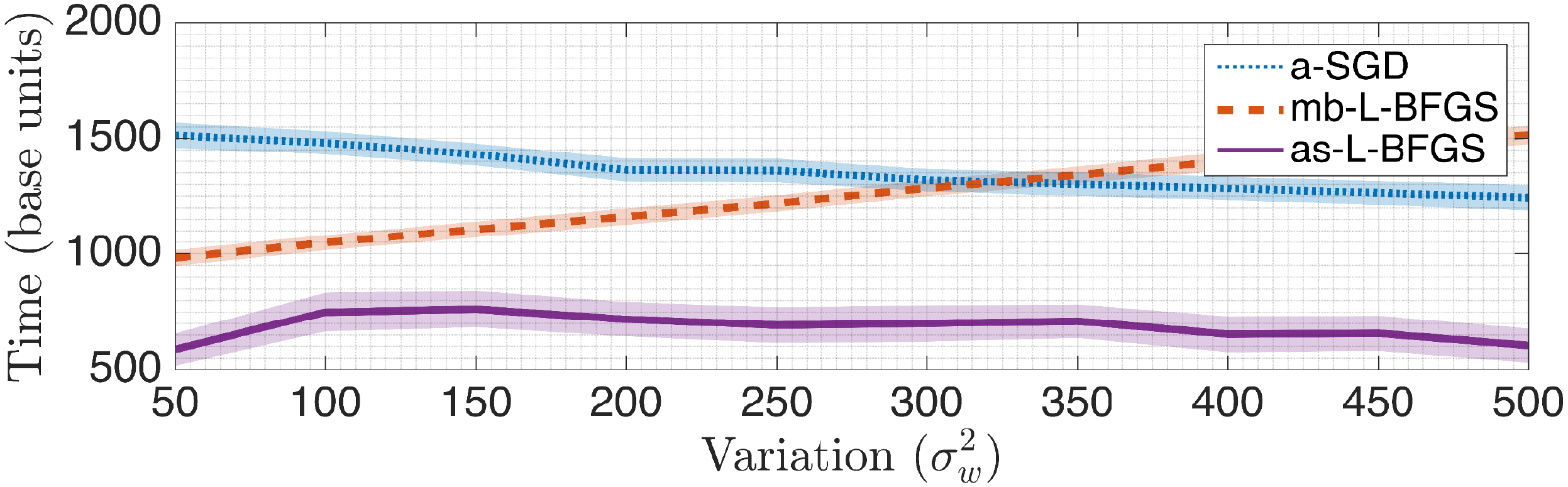}
\caption{The required time to achieve $\varepsilon$-accuracy in the synthetic setting. Solid lines represent the average results and the shades represent three standard deviations.}
\label{fig:simul_res}
\vspace{5pt}
\end{figure}

\textbf{Linear Gaussian model: }
We conduct our first set of experiments on synthetic data where we consider a rather simple convex quadratic problem whose optimum is analytically available. The problem is formulated as finding the MAP estimate of the following linear Gaussian probabilistic model:
\begin{align*}
\theta \sim {\cal N}(0, I), \>\>\> Y_i|\theta \sim {\cal N}(a_i^\top \theta, \sigma_x^2), \>\>\> \forall i =1, \dots, N_Y. 
\end{align*}
We assume that $\{a_n\}_{n=1}^N$ and $\sigma_x^2$ are known and we aim at computing $\theta^\star$. For these experiments, we develop a parametric discrete event simulator that aims to simulate the algorithms in a controllable yet realistic way. The simulator simulates a distributed optimization algorithm once it is provided four parameters: (i) $\mu_m$: the average computational time spent by the master node at each iteration, (ii) $\mu_w$: the average computational time spent by a single worker at each iteration, (iii) $\sigma_w$: the standard deviation of the computational time spent by a single worker per iteration, and (iv) $\tau$: the time spent for communications per iteration. All these parameters are in a generic \emph{base time unit}. Once these parameters are provided for one of the three algorithms, the simulator simulates the (a)synchronous distributed algorithm by drawing random computation times from a log-normal distribution whose mean and variance is specified by $\mu_w$ and $\sigma^2_w$. Figure~\ref{fig:simulvscpp} illustrates a typical outcome of the real and the simulated implementations of as-L-BFGS, where we observe that the simulator is able to provide realistic simulations that can even very well reflect the fluctuations of the algorithm.

In our first experiment, we set $d = 100$, $\sigma_x^2 = 10$, $N_Y=600$, we randomly generate and fix the vectors $\{a_n\}_n$ in such a way that there will be a strong correlation in the posterior distribution, and we finally generate a true $\theta$ and the observations $Y$ by using the generative model. 

For each algorithm, we fix $\mu_m$, $\mu_s$, and $\tau_s$ to realistic values and investigate the effect of the variation among the workers by comparing the running time of the algorithms for achieving $\varepsilon$-accuracy (i.e., $(U(\theta_n)-U^\star)/U^\star \leq \varepsilon$) for different values of $\sigma_w^2$ when $W = 40$. We repeat each experiment $100$ times.
In all our experiments, we have tried several values for the hyper-parameters of each algorithm and we report the best results. All the hyper-parameters are provided in the supplementary document.  

Figure~\ref{fig:simul_res} visualizes the results for the first experiment. We can observe that, for smaller values $\sigma_w^2$ as-L-BFGS and mb-L-BFGS perform similarly, where a-SGD requires more computational time to achieve $\varepsilon$-accuracy. However, as we increase the value of $\sigma_w^2$, mb-L-BFGS requires more computational time in order to be able to collect sufficient amount of stochastic gradients. The results show that both asynchronous algorithms turn out to be more robust to the variability of the computational power of the workers, where as-L-BFGS shows a better performance. 

\begin{figure}[t]
\centering
\includegraphics[width=0.49\columnwidth]{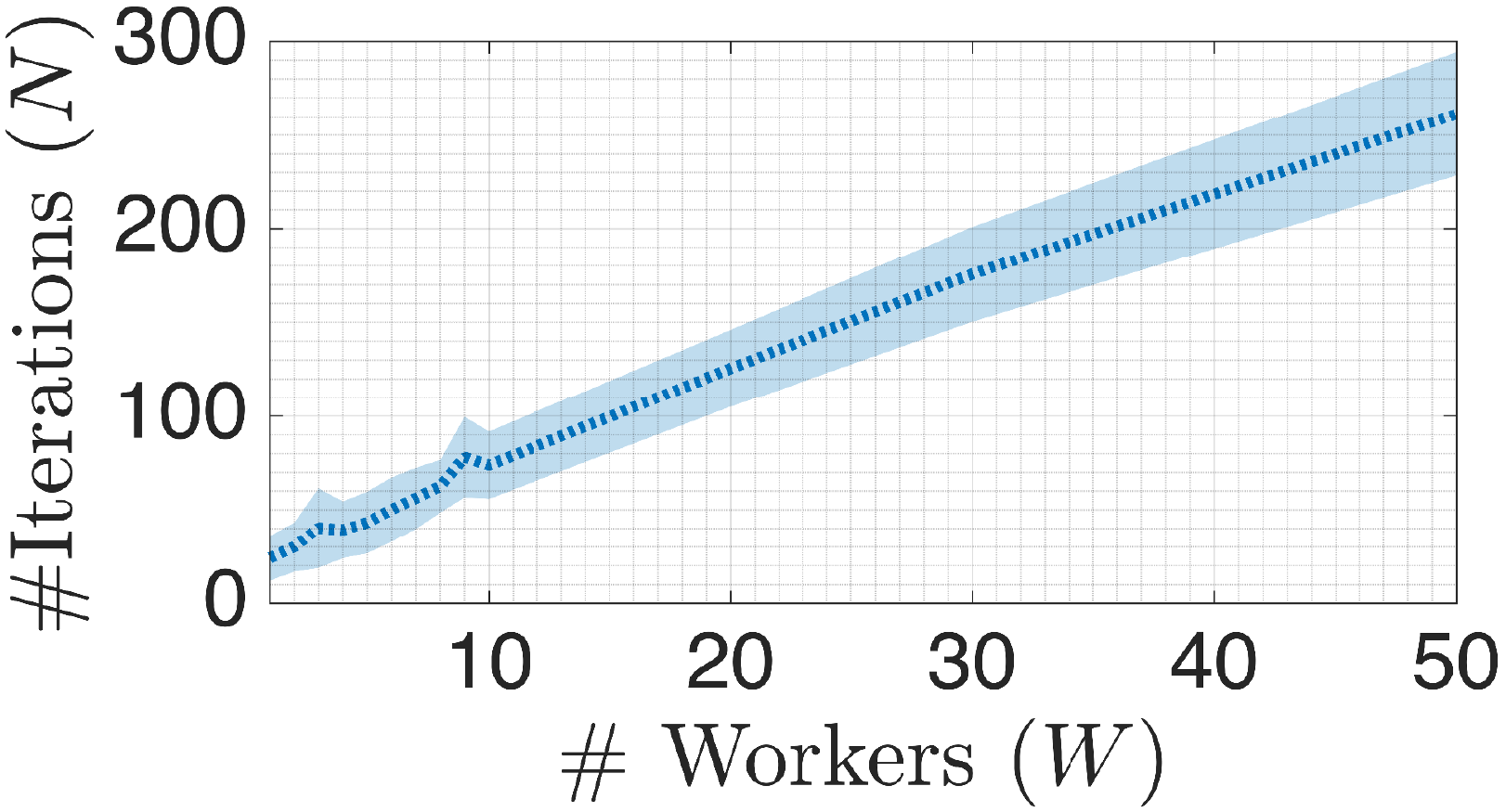}
\includegraphics[width=0.49\columnwidth]{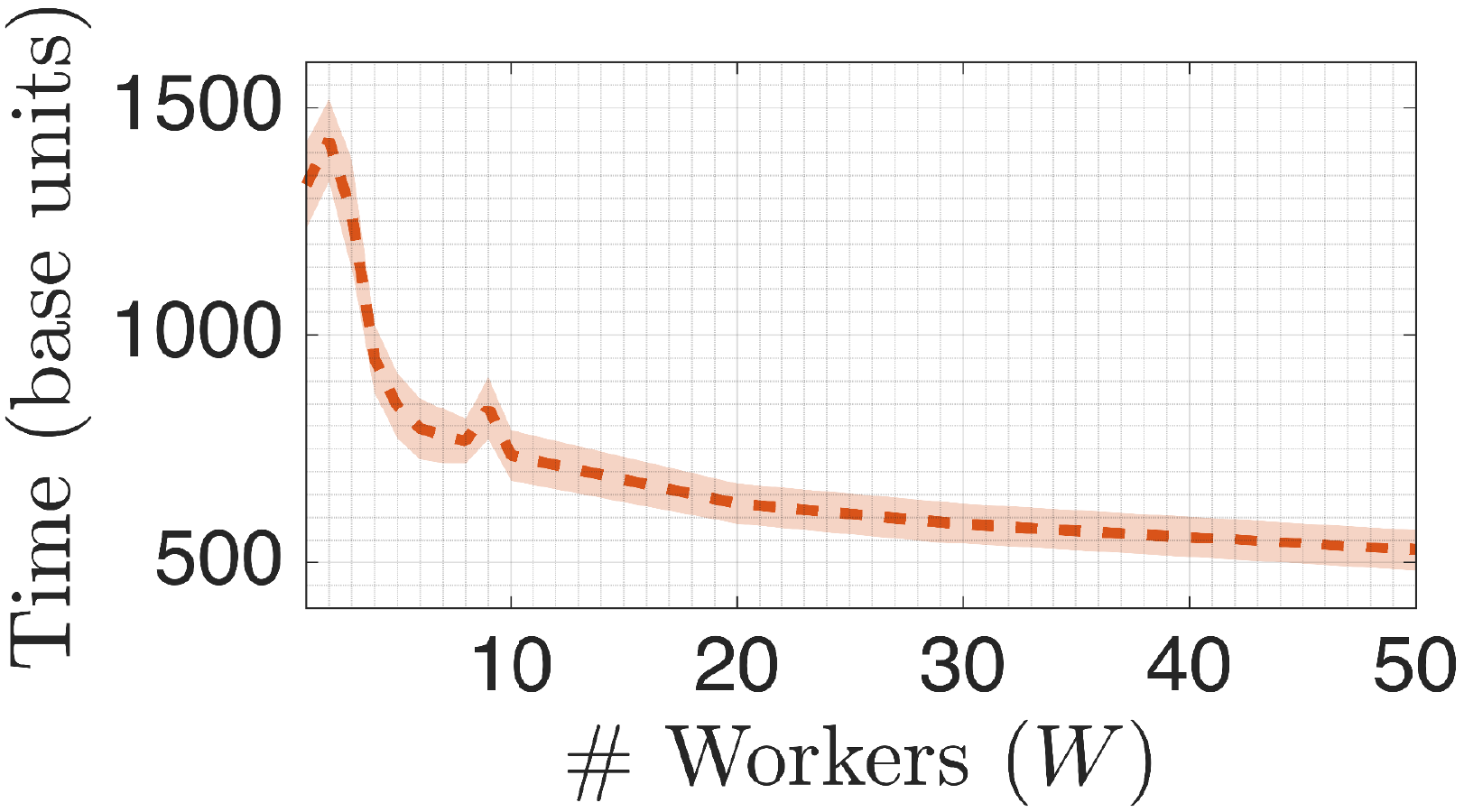}   
\caption{The required time to achieve $\varepsilon$-accuracy by as-L-BFGS for increasing number of workers. Solid lines represent the average results and the shades represent three standard deviations.}
\label{fig:simulspeedup}
\vspace{0pt}
\end{figure}

In our second experiment, we investigate the speedup behavior of as-L-BFGS within the simulated setting. In this setting, we consider a highly varying set of workers and set $\sigma_w^2 = 200$ and vary the number of workers $W$. As illustrated in Figure~\ref{fig:simulspeedup}, as $W$ increases, $\l_\text{max}$ increases as well and the algorithm hence requires more iterations in order to achieve  $\varepsilon$-accuracy, since a smaller step-size needs to be used. However, this increment in the number of iterations is compensated by the increased number of workers, as we observe that the required computational time gracefully decreases with increasing $W$. We observe a similar behavior for different values of $\sigma^2_w$, where the speedup is more prominent for smaller $\sigma_w^2$. 

\textbf{Large-scale matrix factorization: }
In our next set of experiments, we consider a large-scale matrix factorization problem \cite{gemulla2011,csimcsekli2015parallel,simsekli2017parallelized}, where the goal is to obtain the MAP solution of the following probabilistic model: $F_{rk} \sim {\cal N}(0,1), \> G_{ks} \sim {\cal N}(0,1), \> 
Y_{rs} | F,G \sim {\cal N}\bigl(\sum\nolimits_k F_{rk} G_{ks}, 1 \bigr)
$. Here, $Y \in \mathds{R}^{R \times S}$ is the data matrix, and $F \in \mathds{R}^{R \times K}$ and $G \in \mathds{R}^{K \times S}$ are the factor matrices to be estimated.

In this context, we evaluate the algorithms on three large-scale movie ratings datasets, namely MovieLens $1$Million (ML-$1$M), $10$Million (ML-$10$M), and $20$Million (ML-$20$M) (\url{grouplens.org}). 
The ML-$1$M dataset contains $1$ million non-zero entries, where $R = 3883$ (movies) and $S = 6040$ (users). The ML-$10$M dataset contains $10$ million non-zero entries, resulting in a $10681 \times 71567$ data matrix. Finally, the ML-$20$M dataset contains $20$ million ratings, resulting in a $27278 \times 138493$ data matrix.
We have conducted these experiments on a cluster of more than $500$ interconnected computers, each of which is equipped with variable quality CPUs and memories. In these experiments, we have found that the numerical stability is improved when $H_n$ is replaced with $(H_n+ \rho I)$ for small $\rho>0$. This small modification does not violate our theoretical results. The hyper-parameters are provided in the supplementary document.

\begin{figure}[t]
\centering
\includegraphics[width=0.49\columnwidth]{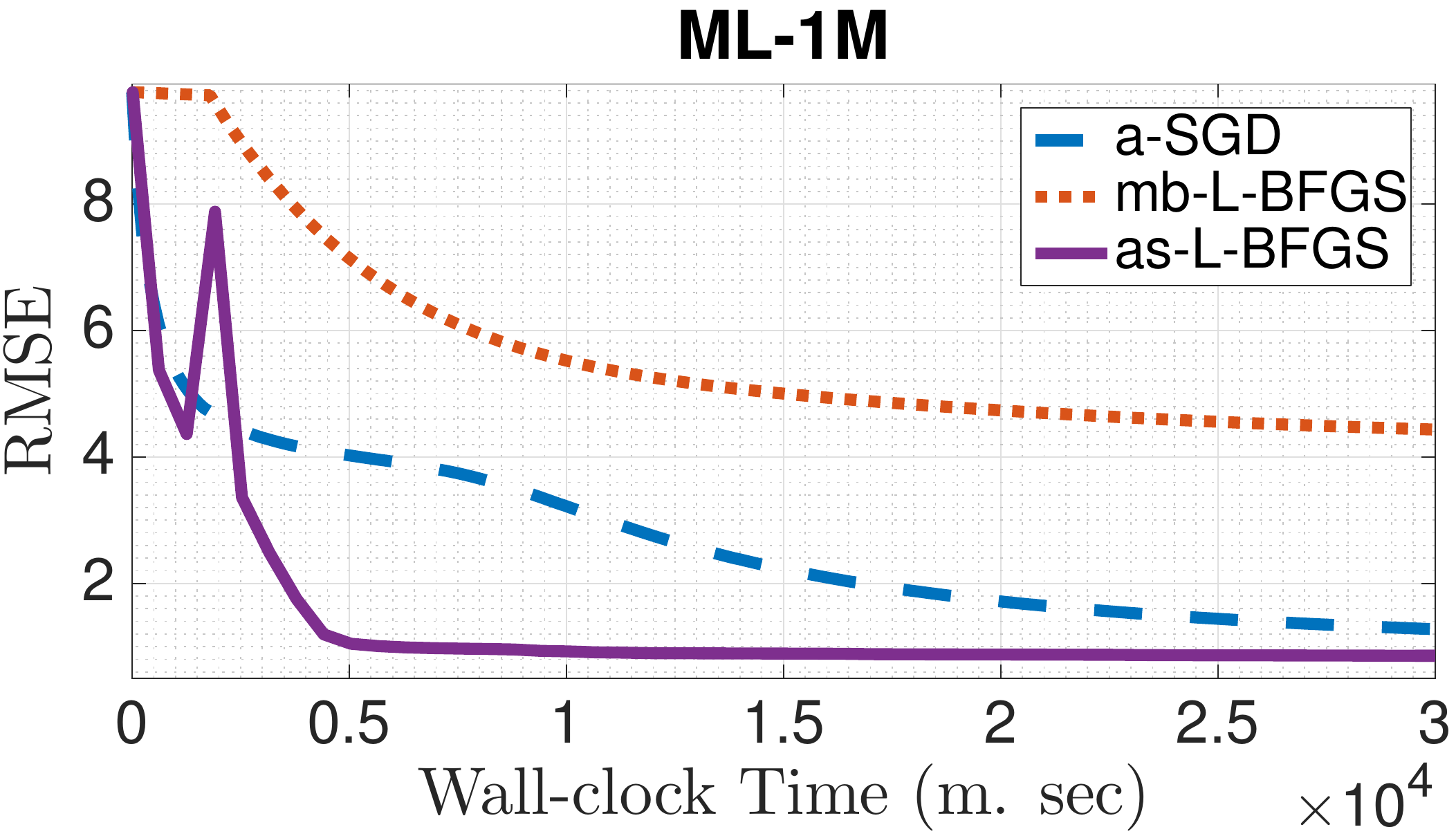}
\includegraphics[width=0.49\columnwidth]{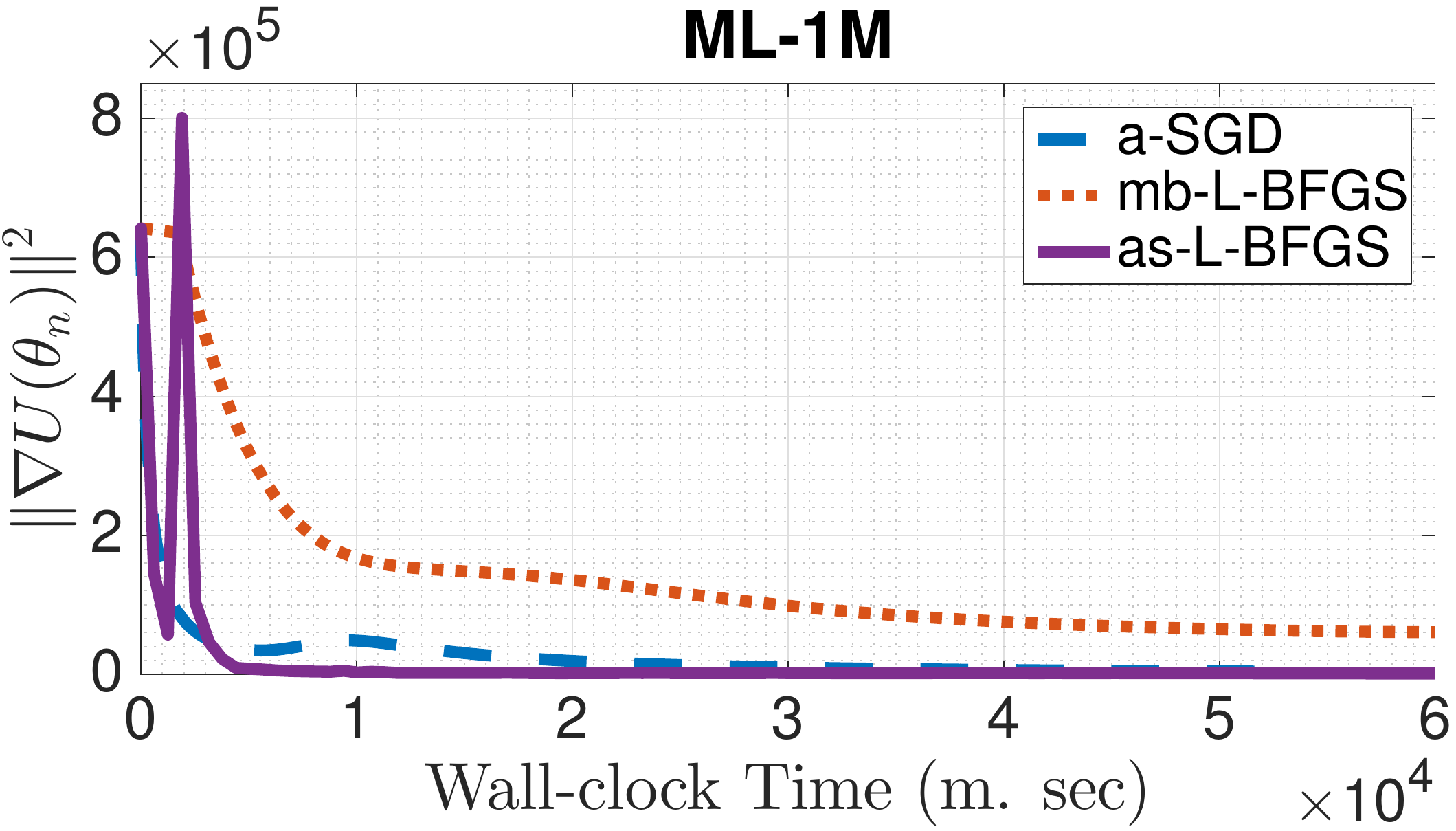}\\
\includegraphics[width=0.49\columnwidth]{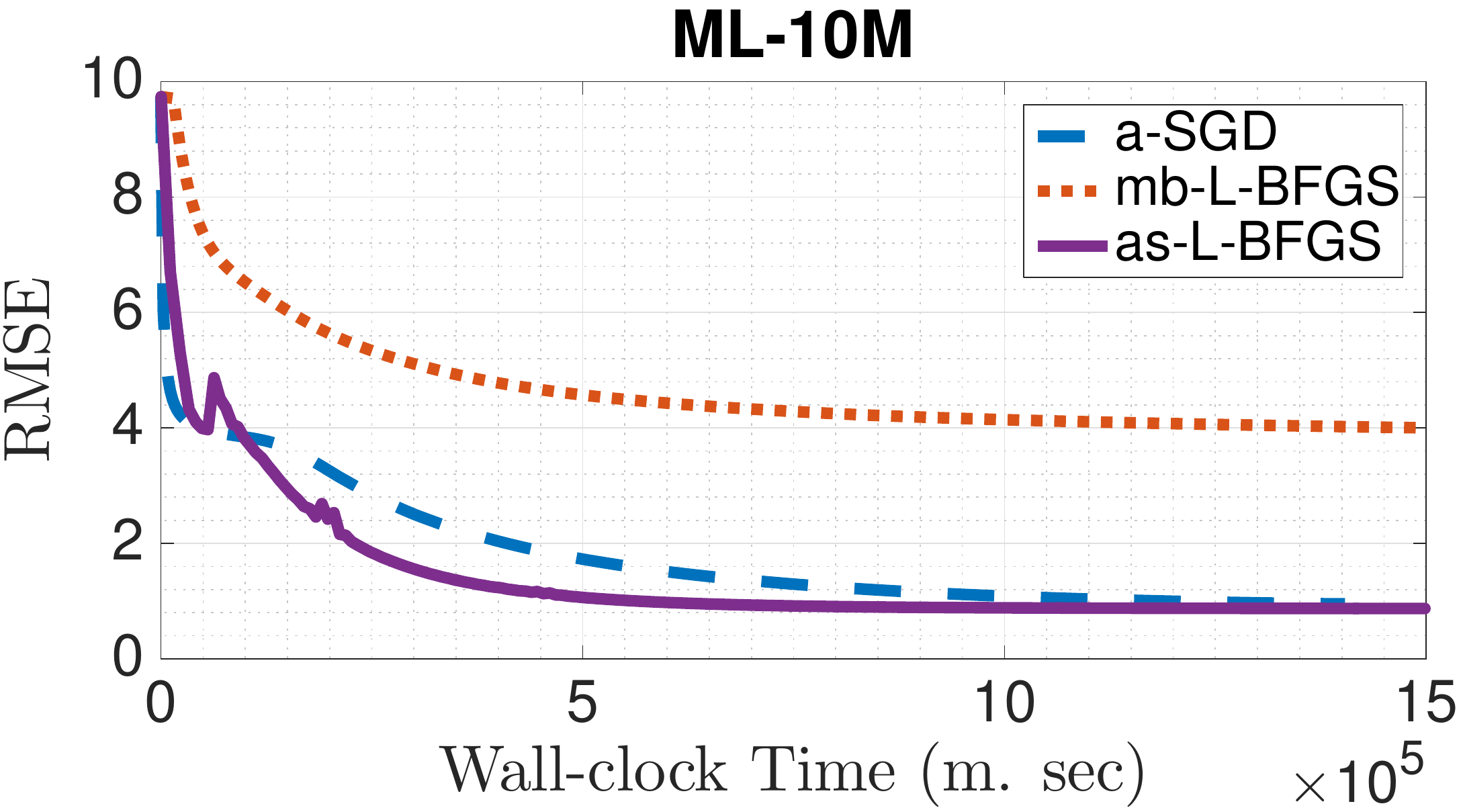}
\includegraphics[width=0.49\columnwidth]{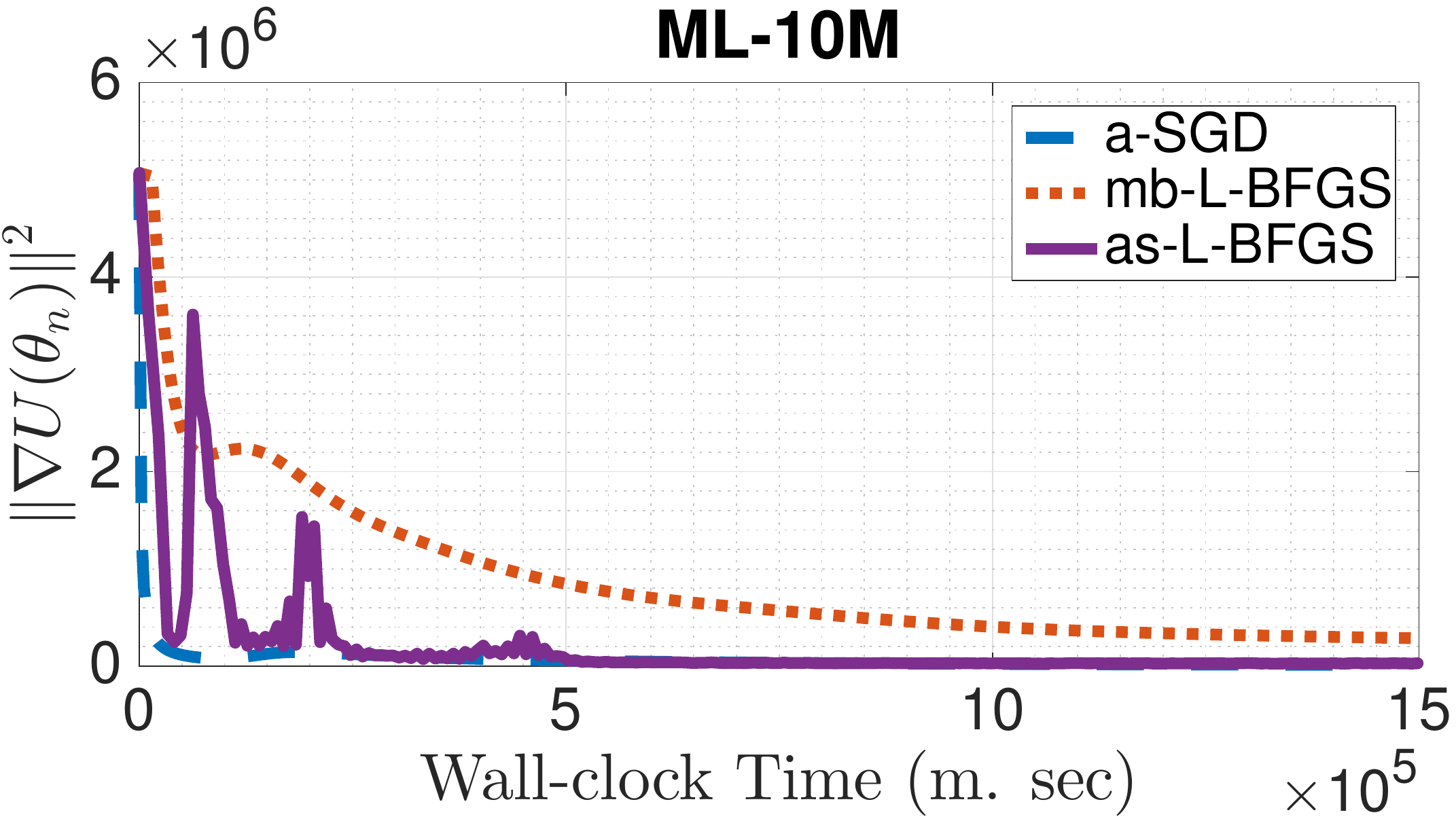} \\
\includegraphics[width=0.49\columnwidth]{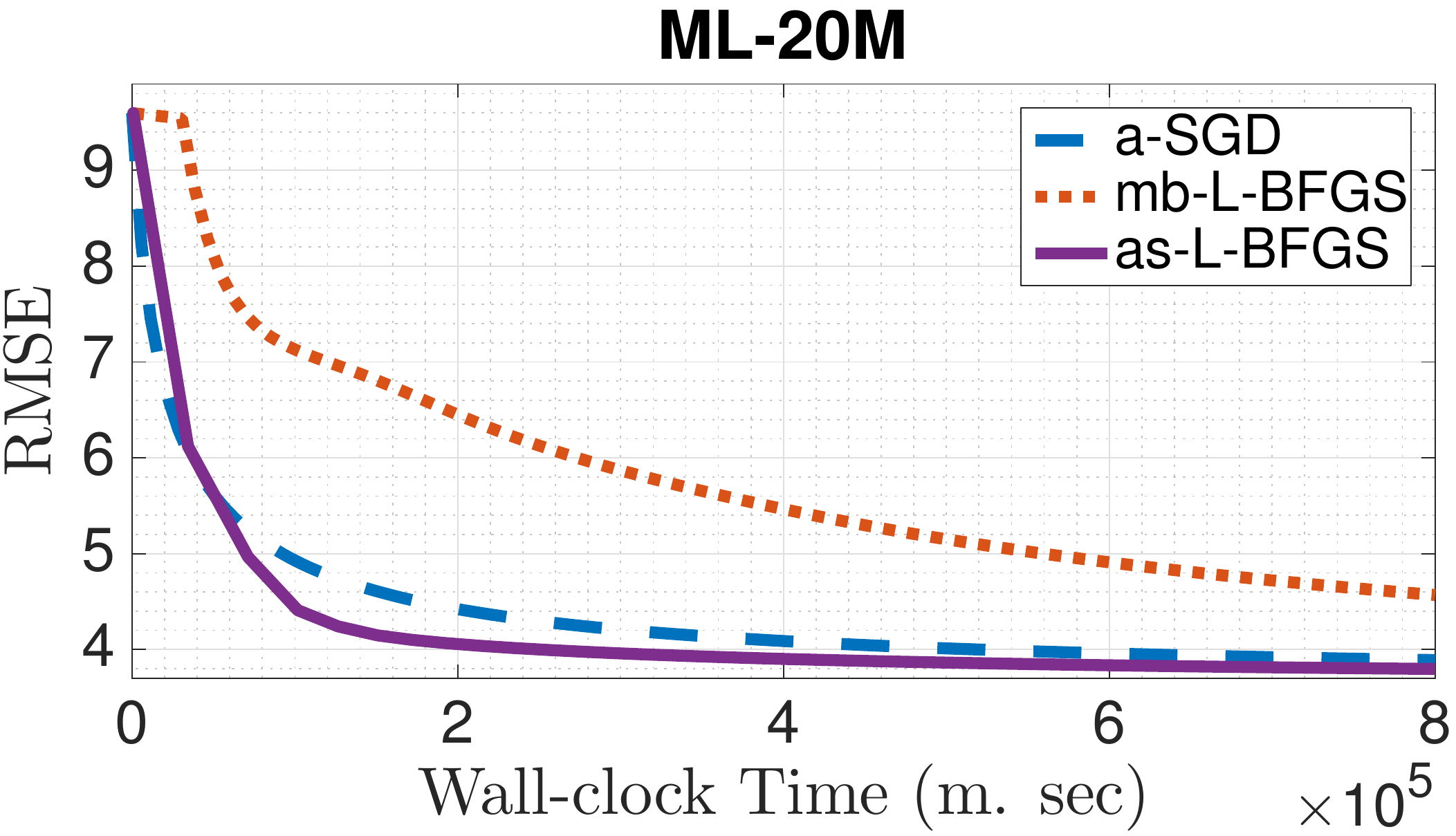}
\includegraphics[width=0.49\columnwidth]{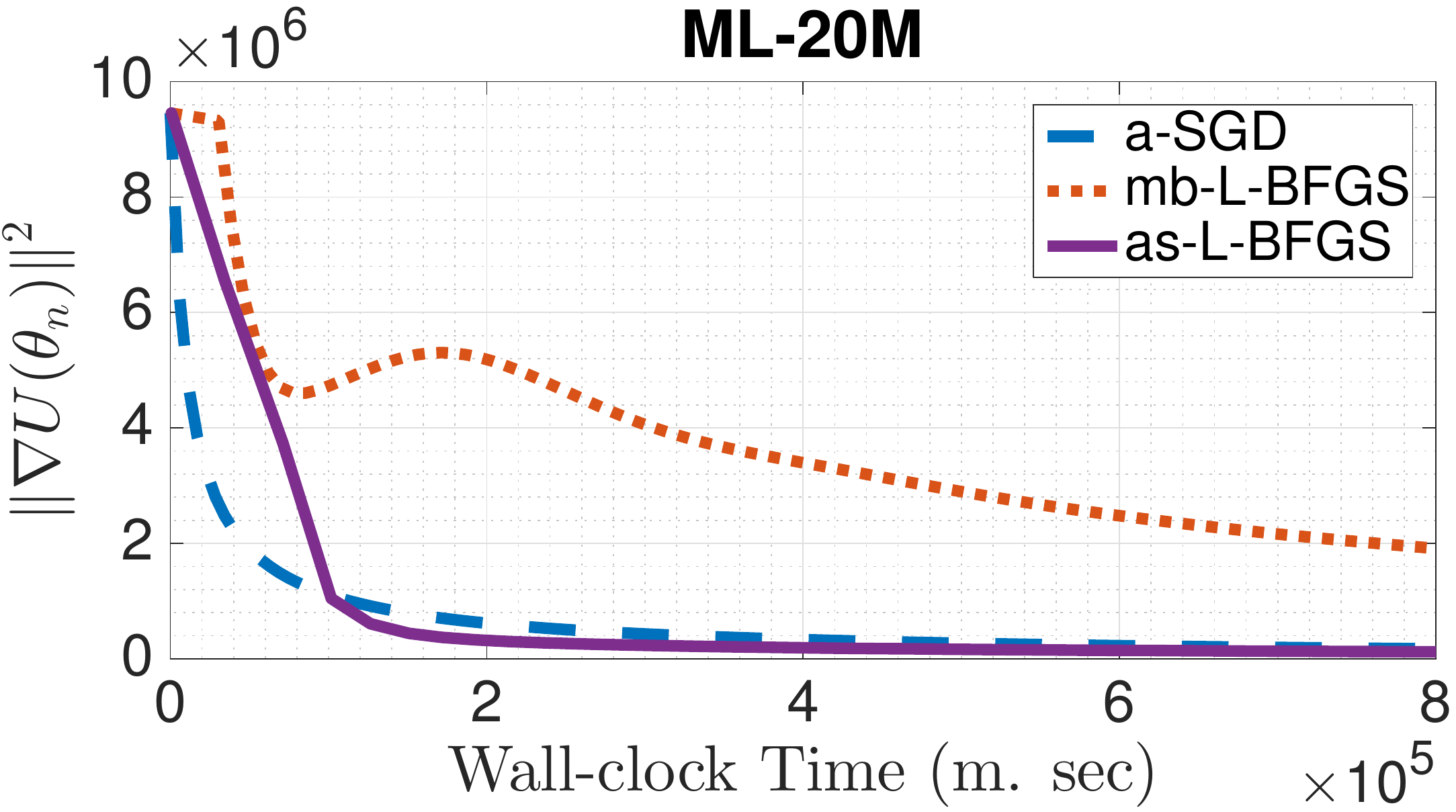}
\caption{The convergence behavior of the algorithms on the MovieLens datasets for $W=10$.  }
\label{fig:ml_exps}
\vspace{8pt}
\end{figure}

Figure~\ref{fig:ml_exps} shows the performance of the three algorithms on the MovieLens datasets in terms of the root-mean-squared-error (RMSE), which is a standard metric for recommendation systems, and the norm of the gradients through iterations. In these experiments, we set $K=5$ for all the three datasets and we set the number of workers to $W=10$. The results show that, in all datasets, as-L-BFGS provides a significant speedup over mb-L-BFGS thanks to asynchrony. We can observe that even when the speed of convergence of mb-L-BFGS is comparable to a-SGD and as-L-BFGS (cf.\ the plots showing the norm of the gradients), the final RMSE yielded by mb-L-BFGS is poorer than the two other methods, which is an indicator that the asynchronous algorithms are able to find a better local minimum. On the other hand, the asynchrony causes more fluctuations in as-L-BFGS when compared to a-SGD.

As opposed to the synthetic data experiments, in all the three MovieLens datasets, we observe that as-L-BFGS provides a slight improvement in the convergence speed when compared to a-SGD. This indicates that a-SGD is able to achieve a comparable convergence speed by taking more steps while as-L-BFGS is computing the matrix-vector products. However, this gap can be made larger by considering a more efficient, yet more sophisticated implementation for L-BFGS computations \cite{chen2014large}.

In our last experiment, we investigate the speedup properties of as-L-BFGS in the real distributed setting. In this experiment, we only consider the ML-$1$M dataset and run the as-L-BFGS algorithm for different number of workers. Figure~\ref{fig:ml_speedup} illustrates the results of this experiment. As we increase $W$ from $1$ to $10$, we obtain a decent speedup that is close to a linear speedup. However, when we set $W=20$ the algorithm becomes unstable, since the term $\stl_\text{max}h$ in \eqref{eqn:thm} dominates. Therefore, for $W=20$ we need to decrease the step-size $h$, which requires the algorithm to be run for a longer amount of time in order to achieve the same error as we achieved when $W$ was smaller. On the other hand, the algorithm achieves a linear speedup in terms of iterations; however, the corresponding result is provided in the supplementary document due to the space constraints.

\begin{figure}[t]
\centering
\includegraphics[width=0.52\columnwidth]{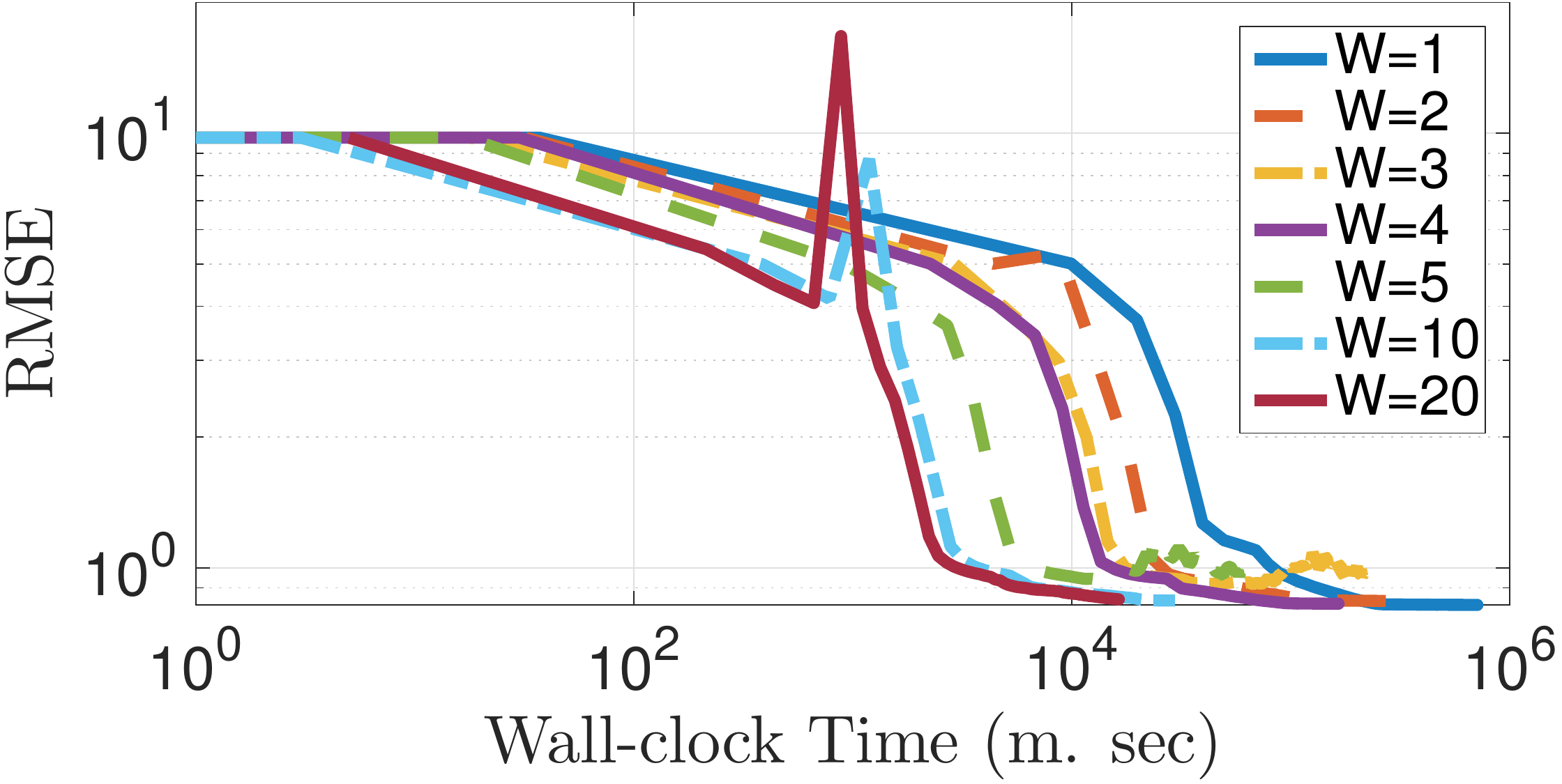} \hspace{10pt}
\includegraphics[width=0.31\columnwidth]{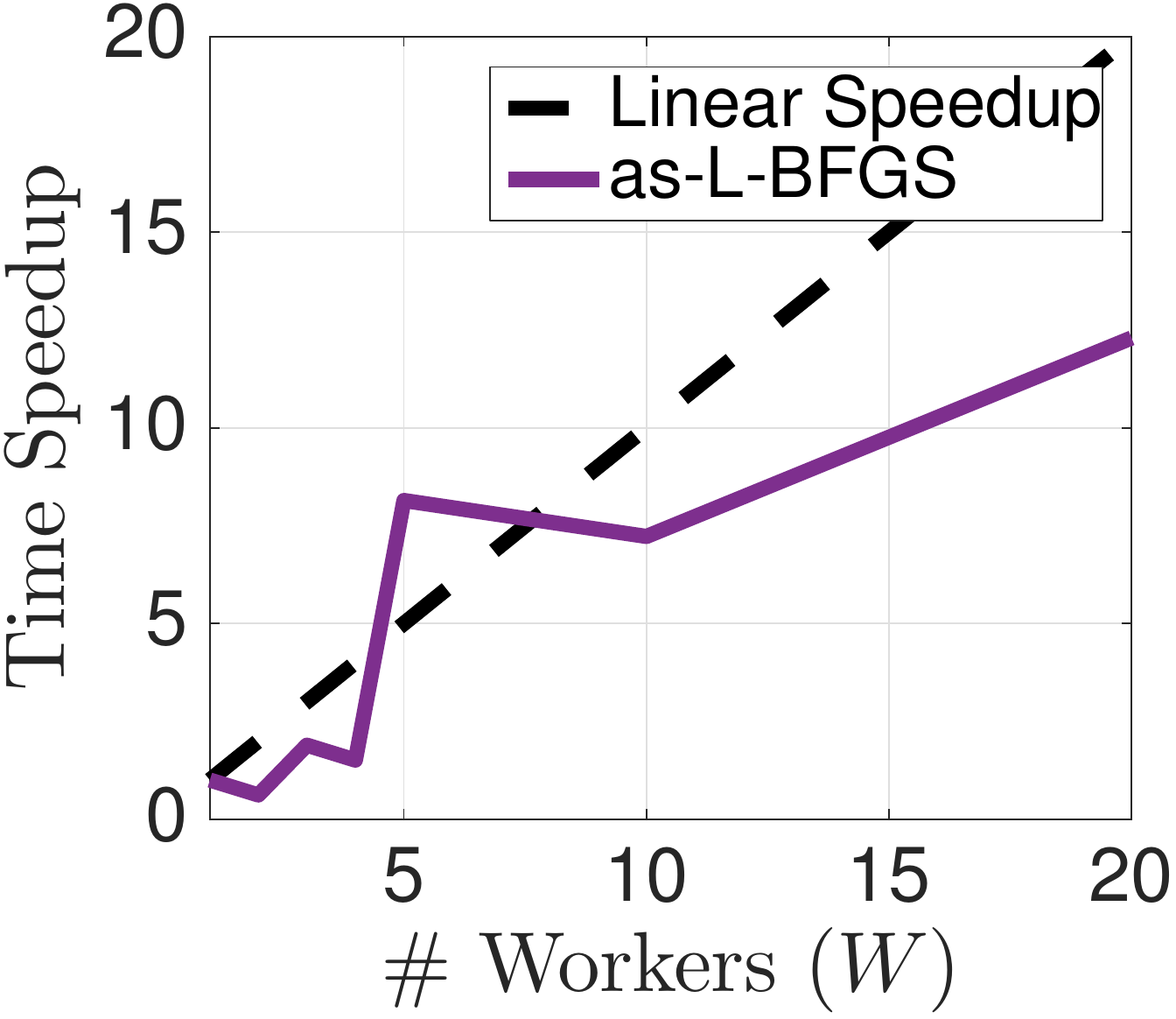}
\caption{The convergence behavior of as-L-BFGS on the ML-$1$M dataset for increasing number of workers. The `time speedup' is computed as the ratio of the running time with $1$ worker to the running time of $W$ workers.}
\label{fig:ml_speedup}
\vspace{8pt}
\end{figure}

\section{Conclusion}

In this study, we proposed an asynchronous parallel L-BFGS algorithm for non-convex optimization. We developed the algorithm within the SG-MCMC framework, where we reformulated the problem as sampling from a concentrated probability distribution.
We proved non-asymptotic guarantees and showed that as-L-BFGS achieves an ergodic global convergence with rate ${\cal O}(1/\sqrt{N})$ and it can achieve a linear speedup. Our experiments supported our theory and showed that the proposed algorithm provides a significant speedup over the synchronous parallel L-BFGS algorithm.

\section*{Acknowledgments}

The authors would like to thank to Murat A. Erdo\u{g}du for fruitful discussions.
This work is partly supported by the French National Research Agency (ANR) as a part of the FBIMATRIX project (ANR-16-CE23-0014), by the Scientific and Technological Research Council of Turkey (T\"{U}B\.{I}TAK) grant number 116E580, and by the industrial chair Machine Learning for Big Data from T\'{e}l\'{e}com ParisTech.

\bibliography{asynch_mcmc}
\bibliographystyle{icml2018}

\vfill
\pagebreak

\newpage

\onecolumn

\icmltitle{Asynchronous Stochastic Quasi-Newton MCMC for Non-Convex Optimization\\ {\normalsize SUPPLEMENTARY DOCUMENT}}

\icmltitlerunning{Asynchronous Stochastic Quasi-Newton MCMC -- Supplementary Document}


{
	\centering
	\textbf{Umut \c Sim\c sekli}$^{\text{1}}$, \textbf{\c Ca\u{g}atay Y{\i}ld{\i}z}$^{\text{2}}$, \textbf{Thanh Huy Nguyen}$^{\text{1}}$, \textbf{Ga\"{e}l Richard}$^{\text{1}}$, \textbf{A. Taylan Cemgil}$^{\text{3}}$ \\
	1: LTCI, T\'{e}l\'{e}com ParisTech, Universit\'{e} Paris-Saclay, 75013, Paris, France \\ 
	2: Department of Computer Science, Aalto University, Espoo, 02150, Finland\\ 
	3: Department of Computer Engineering, Bo\u{g}azi\c ci University, 34342, Bebek, Istanbul, Turkey 

}



\setcounter{section}{0}
\setcounter{equation}{0}
\setcounter{figure}{0}
\setcounter{table}{0}
\setcounter{page}{1}
 \renewcommand{\theequation}{S\arabic{equation}}
 \renewcommand{\thefigure}{S\arabic{figure}}
 \renewcommand{\thelemma}{S\arabic{lemma}}

 \section{The Approximate Euler-Maruyama Scheme}

\subsection{Connection with gradient descent with momentum}
\label{sec:gd_momentum}

The standard Euler-Maruyama scheme for the SDE \eqref{eqn:sde} can be developed as follows: 
\begin{align}
\theta_{n+1} &= \theta_{n} + h H_n(\theta_n) p_n, \label{eqn:euler_th1}  \\
p_{n+1} &= p_n -h H_n(\theta_n) \nabla_\theta U(\theta_n) - h \gamma p_n + \frac{h}{\beta} \Gamma_{n}(\theta_n)   + \sqrt{\frac{2h\gamma}{\beta}  } Z_{n+1} \label{eqn:euler_p1} \\
&= (1-h\gamma)p_n -h H_n(\theta_n) \nabla_\theta U(\theta_n) + \frac{h}{\beta} \Gamma_{n}(\theta_n)   + \sqrt{\frac{2h\gamma}{\beta}  } Z_{n+1} \label{eqn:euler_simple}
\end{align}
where $h$ is the step-size and $\{Z_n\}_{n=1}^N$ is a collection of standard Gaussian random variables.

We can obtain simplified update rules if we define $u_n \triangleq h p_n $ and use it in \eqref{eqn:euler_simple}. The modified update rules are given as follows:
\begin{align}
h p_{n+1} &= h p_n - h^{2} H_n(\theta_n) \nabla_\theta U(\theta_n) - h^2 \gamma p_n + \frac{h^2}{\beta} \Gamma_{n}(\theta_n)   + \sqrt{\frac{2h^3\gamma}{\beta}  } Z_{n+1} \\
u_{n+1} &= u_n - h^{2} H_n(\theta_n) \nabla_\theta U(\theta_n) - h \gamma u_n + \frac{h^2}{\beta} \Gamma_{n}(\theta_n)   + \sqrt{\frac{2h^3\gamma}{\beta}  } Z_{n+1} \\
&= \underbrace{(1- h \gamma)}_{\gamma'} u_n - \underbrace{h^{2}}_{h'} H_n(\theta_n) \nabla_\theta U(\theta_n) + \frac{h^2}{\beta} \Gamma_{n}(\theta_n)   + \sqrt{\frac{2h^3\gamma}{\beta}  } Z_{n+1} \\
&= \gamma' u_n - h' H_n(\theta_n) \nabla_\theta U(\theta_n) + \frac{h'}{\beta} \Gamma_{n}(\theta_n)   + \sqrt{\frac{2 h'(1-\gamma') }{\beta}  } Z_{n+1}, \label{eqn:u_update} 
\end{align}
where $\gamma' \in (0,1)$. If we use the modified Euler scheme as described in \citepNew{neal2010} and replace $p_n$ with $p_{n+1}$ in \eqref{eqn:euler_th1}, we obtain the following update equation:
\begin{align}
 \theta_{n+1} &= \theta_{n} + h H_n(\theta_n) p_{n+1} \\
  &= \theta_{n} + H_n(\theta_n) u_{n+1}.
 \end{align} 
Note that, when $\beta \rightarrow \infty$ we have the following update rules:
\begin{align}
u_{n+1} &= \gamma' u_n - h' H_n(\theta_n) \nabla_\theta U(\theta_n) \\
 \theta_{n+1} &=  \theta_{n} + H_n(\theta_n) u_{n+1},
\end{align}
which coincides with Gradient descent with momentum when $H_n(\theta) = I$ for all $n$.

\subsection{Numerical integration with stale stochastic gradients}

We now focus on \eqref{eqn:euler_th1} and \eqref{eqn:euler_p1}. We first drop the term $\Gamma_n$, replace the gradients $\nabla U$ with the stochastic gradients, and then modify the update rules by using stale parameters $\theta_{n-l_n}$ and $p_{n-l_n}$. The resulting scheme is given as follows:
\begin{align}
\theta_{n+1} &= \theta_{n} + h H_n(\theta_{n-l_n}) p_{n-l_n},  \label{eqn:theta update}\\
p_{n+1} &= p_n -h H_n(\theta_{n-l_n}) \nabla_\theta \tilde{U}_n(\theta_{n-l_n}) - h \gamma p_{n-l_n}   + \sqrt{\frac{2h\gamma}{\beta}  } Z_{n+1}. \label{eqn:p update}
\end{align}
By using a similar argument to the one used in Section~\ref{sec:gd_momentum}, we define $u_n \triangleq h p_n $, $h' = h^2$, $\gamma' = h \gamma$, and obtain the following update equations:
\begin{align}
\theta_{n+1} &= \theta_{n} +  H_n(\theta_{n-l_n}) u_{n-l_n}, \\
u_{n+1} &= u_n - h' H_n(\theta_{n-l_n}) \nabla_\theta \tilde{U}_n(\theta_{n-l_n}) - \gamma' u_{n-l_n}   + \sqrt{\frac{2h'\gamma'}{\beta}  } Z_{n+1}.
\end{align}

\section{Proof of Proposition~\ref{prop:inv_meas_simple}}

\begin{proof}
We start by rewriting the SDE given in \eqref{eqn:sde} as follows:
\begin{align}
d \x_t = \left\{- \left( \underbrace{\begin{bmatrix} 
		   		0 & 0  \\ 
		   		0 & \frac{\gamma}{\beta} I  
		   	\end{bmatrix}}_{\mathbf{D}}
		   	+ 
		\underbrace{\begin{bmatrix} 
		   		0 & - \frac{H_t(\theta_t)}{\beta}  \\ 
		   		\frac{H_t(\theta_t)}{\beta} & 0   
		   	\end{bmatrix}}_{\mathbf{Q}_t(\x)}
		   	\right)
		   \underbrace{	\begin{bmatrix} 
		   		\beta \nabla_\theta U(\theta_t)  \\ 
		   		\beta p_t
		   	\end{bmatrix}}_{\nabla_\x {\cal E}(\x_t)}
		   	+
		   	\underbrace{\begin{bmatrix} 
		   		0  \\ 
		   		\frac1{\beta} \Gamma_t(\theta_t)
		   	\end{bmatrix}}_{\mathbf{\Gamma}_t(\x_t)} \right\}dt +  \sqrt{2 \mathbf{D}} d W_t.
		   	\label{eqn:sde_extended}
\end{align}
Here, we observe that $\mathbf{D}$ is positive semi-definite, $\mathbf{Q}$ is anti-symmetric. Furthermore, for all $i \in \{1,2,\dots,2d\}$ we observe that
\begin{align}
\Bigl[\mathbf{\Gamma}_{t}(\x)\Bigr]_i = \sum_{j=1}^{2d} \frac{\partial [\mathbf{D} + \mathbf{Q}_t(\x)]_{ij} }{\partial_{\x_j}}.
\end{align}
The assumptions \Cref{asmp:lipschitz,asmp:H_lipschitz} directly imply that the function $-(\mathbf{D} + \mathbf{Q}_t(\x))\nabla_\x {\cal E}(\x) + \mathbf{\Gamma}_t(\x)$ is locally Lipschitz continuous in $\x$ for all $t$. Then, the desired result is obtained by applying Theorem~1 of \citepNew{ma2015complete} and Proposition~4.2.2 of \citepNew{kunze}.
\end{proof}

\section{Proof of Lemma~\ref{lem:euler}}

\subsection{Preliminaries}

In the rest of this document, if there is no specification, the notation $\mathds{E}\bigl[F\bigr]$ will denote the expectation taken over \textit{all the random sources} contained in $F$.

Before providing the proof of Lemma~\ref{lem:euler}, let us consider the following It\^{o} diffusion:
\begin{align}\label{eqn:Ito}
d\x_t=b(\x_t)dt + \sigma(\x_t)dW_t,
\end{align}
where $\x_t\in\mathds{R}^{2d}$, $b:\,\mathds{R}^{2d}\rightarrow\mathds{R}^{2d}$, $\sigma:\,\mathds{R}^{2d}\rightarrow\mathds{R}^{2d\times 2d}$, and $W_t$ is Brownian motion in $\mathds{R}^{2d}$. The generator $\Lo$ for \eqref{eqn:Ito} is formally defined as follows:
\begin{align}\label{eqn:Ito Generator}
\Lo f(\x_t)\triangleq\lim_{h\rightarrow 0^+}\frac{\mathds{E}[f(\x_{t+h})\vert\x_t]-f(\x_t)}{h}=\Big(b(\x_t)\cdot\nabla+\frac{1}{2}\big(\sigma(\x_t)\sigma(\x_t)^\top\big):\nabla\nabla^\top\Big)f(\x_t),
\end{align}
where $f:\mathds{R}^n\rightarrow\mathds{R}$ is any twice differentiable function whose support is compact. Here, $a\cdot b$ denotes the inner product between vectors $a$ and $b$, $A:B$ by definition is equal to $tr\{A^{\top}B\}$ for some matrices $A$ and $B$. In our study, the generator for the diffusion \eqref{eqn:sde_extended} is then defined as follows: (define $n = t/h$ and use \eqref{eqn:Ito Generator})
\begin{align}
\mathcal{L}_n\triangleq\Big(H_n p_n\cdot\nabla_{\theta} - \big(H_n\nabla_{\theta}U(\theta_{n}) + \gamma p_n - \frac{1}{\beta}\Gamma_n(\theta_{n-\stl_n})\big)\cdot\nabla_p\Big) + \mathbf{D}:\nabla_{\x}\nabla_{\x}^{\top}.
\end{align}
We also define the following operator for the approximate Euler-Maruyama scheme with delayed updates:
\begin{align}\label{eqn:localGen}
\tilde{\mathcal{L}}_n\triangleq\Big(H_n p_n\cdot\nabla_{\theta} - \big(H_n \nabla_{\theta}\tilde{U}(\theta_{n-\stl_n}) + \gamma p_n\big) \cdot\nabla_p\Big) + \mathbf{D}:\nabla_{\x}\nabla_{\x}^{\top}.
\end{align}
By using the definitions $\Lo_n$ and $\Lot_n$, we obtain the following identity:
\begin{align}
\Lot_n = \Lo_n - \Delta V_n, \label{eqn:op_approx}
\end{align}
where $\Delta V_n \triangleq \Bigl( H_{n}(\theta_{n-\stl_n}) (\nabla_\theta \tilde{U}_{n}(\theta_{n-\stl_n}) - \nabla_\theta U(\theta_n)) + \frac1{\beta} \Gamma(\theta_{n-l_n}) \Bigr) \cdot \nabla_p$. This operator essentially captures all the errors induced by the approximate integrator.

We now proceed to the proof of Lemma~\ref{lem:euler}. The proof uses several technical lemmas that are given in Section~\ref{sec:techlemma}.

\subsection{Proof of Lemma~\ref{lem:euler}}

\begin{proof}
We first consider the Euler-Maruyama integrator of the SDE \eqref{eqn:sde_extended}, to combine \eqref{eqn:euler_th1} and \eqref{eqn:euler_simple} into a single equation, given as follows:
\begin{align*}
\x_{n+1} = \x_{n} - h (\mathbf{D} + \mathbf{Q}_n(\x_n)) \nabla{\cal E}(\x_{n}) + h \mathbf{\Gamma}_{n+1}(\x_{n}) + \sqrt{2 h\mathbf{D}} Z'_{n+1}
\end{align*}
where $Z'_n$ is a standard Gaussian random variable in $\mathds{R}^{2d}$, h is the step-size, $\mathbf{D}$, $\mathbf{Q}$, and $\mathbf{\Gamma}$ are defined in \eqref{eqn:sde_extended}. We then modify this scheme such that we replace the gradient $\nabla {\cal E}$ with the stale stochastic gradients and we discard the term $\mathbf{\Gamma}$. The resulting numerical integrator is given as follows:
\begin{align}
\x_{n+1} = \x_{n} - h (\mathbf{D} + \mathbf{Q}_n(\x_{n-l_n})) \nabla\tilde{\cal E}_n(\x_{n-l_n}) + \sqrt{2 h\mathbf{D}} Z'_{n+1}. \label{eqn:euler_x_final}
\end{align}
Note that \eqref{eqn:euler_x_final} coincides with the proposed algorithm, given in \eqref{eqn:update_th_ult}.

In the sequel, we follow a similar strategy to \citepNew{chen2016stochastic}. However, we have additional difficulties caused by the usage of L-BFGS matrices, which are reflected in the operator $\Delta V_n$.
Since we are using the Euler-Maruyama integrator, we have the following inequality \citepNew{chen2015convergence}: 
\begin{align}
\E[\psi(\x_n)\vert\x_{n-1}] = (\mathds{I} + h \Lot_n) \psi(\x_{n-1}) + \Oc(h^2).
\label{eqn:euler_approx}
\end{align}
By summing both sides of \eqref{eqn:euler_approx} over $n$, taking the expectation, and using \eqref{eqn:op_approx}, we obtain the following: 
\begin{align}
\sum_{n=1}^N \E[\psi(\x_n)] = \psi(\x_0) + \sum_{n=1}^{N-1} \E[\psi(\x_n)] - h \sum_{n=1}^N \E[\Delta V_n \psi(\x_{n-1})] + h \sum_{n=1}^N \E[\Lo_n \psi(\x_{n-1})] + \Oc(Nh^2).
\end{align}
By rearranging the terms and dividing all the terms by $Nh$, we obtain:
\begin{align}
\frac{\E{\psi(\x_N)} - \psi(\x_0)}{Nh} = \frac{-\sum_{n=1}^N \E[\Delta V_n \psi(\x_{n-1})] + \sum_{n=1}^N \E[\Lo_n \psi(\x_{n-1})]}{N} + \Oc(h).
\end{align}
By using the Poisson equation given in \eqref{eqn:poisson_eq} for each $\Lo_n \psi(\x_{n-1})$ and rearranging the terms, we obtain:
\begin{align}
\E[\frac1{N} \sum_n(U(\theta_n) - \bar{U}_{\beta}) ] = \frac{\E[\psi(\x_N)] - \psi(\x_0)}{Nh} + \frac{\sum_{n=1}^N \E[\Delta V_n \psi(\x_{n-1})]}{N} + \Oc(h).
\end{align}
By assumption \Cref{asmp:poisson}, the term $\E[\psi(\x_N)] - \psi(\x_0)$ is uniformly bounded. Then, by Assumption \Cref{asmp:poisson} and Lemma~\ref{cor:deltav}, we obtain the following bound:
\begin{align}
\E[\frac1{N} \sum_n(U(\theta_n) - \bar{U}_{\beta}) ] = \Oc \Bigl( \frac{1}{Nh} + \max(\stl_{\text{max}},1)h + \frac{1}{\beta}\Bigr), \label{eqn:bound_bias_interm}
\end{align}
as desired.
\end{proof}

\begin{remark}
Theorem~\ref{thm:bias} significantly differentiates from other recent results. First of all, none of the references we are aware of provides an analysis for an asynchronous stochastic L-BFGS algorithm. Aside from this fact, when compared to \citepNew{chen2016bridging}, our bound handles the case of delayed updates and provides an explicit dependence on $\beta$. When compared to \citepNew{chen2016stochastic}, our analysis considers the tempered case and handles the additional difficulties brought by the L-BFGS matrices and their derivatives. On the other hand, our analysis is also significantly different than the ones presented in \citepNew{raginsky17a} and \citepNew{xu2017global}, as it considers the asynchrony and L-BFGS matrices, and provides a bound for the ergodic error.
\end{remark}

\section{Proof of Lemma~\ref{lem:entropy}}

\begin{proof}
We use the same proof technique given in \citepNew{raginsky17a}[Proposition 11]. We assume that $\pi_\theta$ admits a density with respect to the Lebesgue measure, denoted as $\rho(\theta) \triangleq \frac1{Z_\beta} \exp(-\beta U(\theta))$, where $Z_\beta$ is the normalization constant: $Z_\beta \triangleq \int_{\mathds{R}^d} \exp(-\beta U(\theta)) d\theta$. We start by using the definition of $\bar{U}_\beta$, as follows:
\begin{align}
	\bar{U}_\beta = \int_{\mathds{R}^d} U(\theta) \pi_\theta(d\theta) = \frac1{\beta} ({\cal H}(\rho) - \log Z_\beta ), \label{eqn:ubeta_def}
\end{align}
where ${\cal H}(\rho)$ is the \emph{differential entropy}, defined as follows:
\begin{align}
{\cal H}(\rho) \triangleq - \int_{\mathds{R}^d} \rho(\theta) \log \rho(\theta) d\theta.
\end{align}
We now aim at upper-bounding ${\cal H}(\rho)$ and lower-bounding $\log Z_\beta$. By Assumption \Cref{asmp:var}, the distribution $\pi_\theta$ has a finite second order moment, therefore its differential entropy is upper-bounded by the differential entropy of a Gaussian distribution that has the same second order moment. Then, we obtain
\begin{align}
{\cal H}(\rho) &\leq \frac1{2} \log [ (2\pi e)^d \det( \Sigma)] \\
&\leq \frac1{2} \log [ (2\pi e)^d \Bigl(\frac{\text{tr} (\Sigma)}{d}\Bigr)^d] \label{eqn:entropy1} \\
&\leq \frac{d}{2} \log \Bigl( 2\pi e \frac{C_\beta}{\beta d}\Bigr), \label{eqn:entropy2}
\end{align}
where $\Sigma$ denotes the covariance matrix of the Gaussian distribution. In \eqref{eqn:entropy1} we used the relation between the arithmetic and geometric means, and in \eqref{eqn:entropy2} we used Assumption \Cref{asmp:var}. 

We now lower-bound $\log Z_\beta$. By definition, we have
\begin{align}
\log Z_\beta &= \log \int_{\mathds{R}^d} \exp(-\beta U(\theta)) d \theta \\
&= -\beta U^\star  + \log \int_{\mathds{R}^d} \exp(\beta (U^\star - U(\theta))) d \theta \\
&\geq -\beta U^\star  + \log \int_{\mathds{R}^d} \exp(-\frac{\beta L \|\theta - \theta^\star\|^2}{2} ) d \theta \label{eqn:laplace_inter} \\
&= -\beta U^\star  + \frac{d}{2} \log(\frac{2\pi}{L \beta}). \label{eqn:entropy_bound2}
\end{align}
Here, in \eqref{eqn:laplace_inter} we used Assumption \Cref{asmp:lipschitz} and \citepNew{nesterov2013introductory}(Lemma 1.2.3).

Finally, by combining \eqref{eqn:ubeta_def}, \eqref{eqn:entropy2}, and \eqref{eqn:entropy_bound2}, we obtain
\begin{align}
\bar{U}_\beta - U^\star &= \frac1{\beta}({\cal H}(\rho) - \log Z_\beta ) - U^\star \\ 
& \leq \frac1{\beta}\Bigl( \frac{d}{2} \log \bigl( 2\pi e \frac{C_\beta}{\beta d}\bigr) +\beta U^\star  - \frac{d}{2} \log(\frac{2\pi}{L \beta}) \Bigr) - U^\star \\
&= \frac1{\beta} \frac{d}{2} \log\Bigl( \frac{ e C_\beta L }{ d}  \Bigr) \\
&= {\cal O}\Bigl( \frac1{\beta} \Bigr).
\vspace{-10pt}
\end{align}
This finalizes the proof.
\end{proof}

\section{Proof of Theorem~\ref{thm:bias}}

\begin{proof}

We decompose the error, as follows:
\vspace{-10pt}
\begin{align}
\Bigl| \mathds{E}\hat{U}_N - U^\star \Bigr| &= \Bigl| \mathds{E}[\frac1{N} \sum_{n=1}^N(U(\theta_n) - U^\star)] \Bigr|\\
&= \Bigl|  \mathds{E}[ \frac1{N} \sum_{n=1}^N \bigl( U(\theta_n) - \bar{U}_{\beta}] \bigr) + \bigl(\bar{U}_{\beta} - U^\star \bigr) \Bigr|  \\
&\leq \underbrace{\Bigl|  \mathds{E}[ \frac1{N} \sum_{n=1}^N \bigl( U(\theta_n) - \bar{U}_{\beta}] \bigr) \Bigr|}_{{\cal A}_1} + \underbrace{\Bigl( \bar{U}_{\beta} - U^\star  \Bigr)}_{{\cal A}_2},
\label{eqn:bound_bias_pre}
\end{align}
where the term ${\cal A}_1$ is bounded by Lemma~\ref{lem:euler} and the term ${\cal A}_2$ is bounded by Lemma~\ref{lem:entropy}. This finalizes the proof. 
\end{proof}

\section{Technical Lemmas}

\label{sec:techlemma}

For convenience, let us introduce the following notations:
$\bar{\x}_k\triangleq(\x_0,\ldots,\x_k)$. 
Let us also denote $\Omega_n$ the (uniform) random subsample, which is chosen independently of ($\bar{\x}_n$), used for iteration $n$.
\begin{lemma}
\label{lemma:expectation on X}
Let $f_k (\x)\triangleq \| \x - \x_{k-1} \|$. Under the assumptions \Cref{asmp:H_lipschitz}-\ref{asmp:Taylor expansion}, the following bound holds: 
\begin{align}
\label{eqn:expectation on gradient}
\mathds{E}_{\bar{\x}_n} \bigl[\| \nabla_\theta U(\theta_{n-\stl_n}) - \nabla_\theta U(\theta_{n}) \|\bigr] =   {\cal O}\Bigl(\stl_{\text{max}} h \max_{i \in \llbracket n-\stl_{\text{max}} + 1, n \rrbracket} \mathds{E}\bigl[\Lo_i f_i(\x_{i-1})\bigr] + h^2\Bigr)
\end{align}
where $\mathds{E}_{\bar{\x}_k}$ denotes the expectation taken over the random variables $\x_0,\ldots,\x_k$.
\end{lemma}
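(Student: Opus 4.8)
The plan is to bound the gradient mismatch by the displacement of the iterates over the staleness window, and then to convert the one-step displacement into the generator $\Lo_i$ acting on $f_i$. First I would use the Lipschitz continuity of $\nabla_\theta U$ (\Cref{asmp:lipschitz}) together with a telescoping sum over the at most $\stl_{\text{max}}$ intermediate updates to write
\begin{align*}
\bigl\|\nabla_\theta U(\theta_{n-\stl_n}) - \nabla_\theta U(\theta_n)\bigr\| \le L \|\theta_n - \theta_{n-\stl_n}\| \le L \sum_{i=n-\stl_n+1}^{n} \|\theta_i - \theta_{i-1}\| \le L \sum_{i=n-\stl_n+1}^{n} f_i(\x_i),
\end{align*}
where the last inequality uses that the $\theta$-block is a sub-vector of $\x=[\theta^\top,p^\top]^\top$, so $\|\theta_i-\theta_{i-1}\|\le\|\x_i-\x_{i-1}\|=f_i(\x_i)$. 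After taking expectations it therefore suffices to estimate $\E_{\bar{\x}_n}[f_i(\x_i)]$ for each $i$ in the window $\llbracket n-\stl_{\text{max}}+1,n\rrbracket$.

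For a fixed $i$ I would condition on $\bar{\x}_{i-1}$ and exploit the crucial fact that $f_i$ vanishes at its own base point, i.e.\ $f_i(\x_{i-1})=0$. Applying the one-step weak expansion of the Euler--Maruyama integrator (the relation \eqref{eqn:euler_approx}, now applied to $f_i$ rather than to $\psi$) then gives
\begin{align*}
\E[f_i(\x_i)\mid\bar{\x}_{i-1}] = f_i(\x_{i-1}) + h\,\Lot_i f_i(\x_{i-1}) + \Oc(h^2) = h\,\Lot_i f_i(\x_{i-1}) + \Oc(h^2),
\end{align*}
with $\Lot_i$ the approximate stale/stochastic generator of \eqref{eqn:localGen}. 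I would next substitute the identity $\Lot_i=\Lo_i-\Delta V_i$ from \eqref{eqn:op_approx}, which produces the target term $h\,\Lo_i f_i(\x_{i-1})$ together with the correction $-h\,\Delta V_i f_i(\x_{i-1})$.

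Finally I would take the total expectation, sum the $\stl_n\le\stl_{\text{max}}$ single-step estimates, and bound each $\E[\Lo_i f_i(\x_{i-1})]$ by its maximum over the window, which yields the leading term $\stl_{\text{max}}h\max_i\E[\Lo_i f_i(\x_{i-1})]$. The correction $\Delta V_i f_i$ I would control using $\|\nabla_p f_i\|\le 1$ together with \Cref{asmp:vargrad} to bound the stochastic-gradient fluctuation in $L^1$ by $\sqrt{\sigma}$ and \Cref{asmp:H_lipschitz} to bound $\tfrac1\beta\Gamma_i$ by $\Oc(1/\beta)$; since this correction carries an extra factor $h$ and is of order no larger than the leading term, it is absorbed, as are the summed Taylor remainders, into the two terms of the stated bound.

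The step I expect to be the main obstacle is the non-smoothness of $f_i(\x)=\|\x-\x_{i-1}\|$ precisely at the point $\x_{i-1}$ where the generator is evaluated: the diffusion (second-order) part of $\Lo_i$ is singular there, so the Dynkin/weak-expansion step cannot be applied verbatim and must be justified by mollifying the Euclidean norm near the origin (or by reading $\E[\Lo_i f_i(\x_{i-1})]$ as the averaged limit $\E[f_i(\x_i)]/h$, which remains finite after integration). A secondary subtlety is a potential circularity, since $\Delta V_i f_i$ again contains a gradient-staleness term of exactly the form being estimated; I would need to verify that this feedback enters only at strictly higher order in $h$ so that the recursion closes rather than contributing at leading order.
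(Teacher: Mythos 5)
Your proposal follows essentially the same route as the paper's proof: Lipschitz continuity of $\nabla_\theta U$ plus a telescoping sum over the staleness window, then the first-order weak expansion of the integrator combined with $f_i(\x_{i-1})=0$ to turn each one-step displacement into $h\,\E[\Lo_i f_i(\x_{i-1})]+\Oc(h^2)$, and finally a maximum over the window. The only differences are that the paper writes the one-step expansion directly with the true generator $\Lo_{i+1}$ (invoking \Cref{asmp:Taylor expansion} and first-order accuracy of the Euler scheme) rather than passing through $\Lot_i$ and the $\Delta V_i$ correction, and that the non-smoothness of $f_i$ at $\x_{i-1}$ which you rightly flag is not addressed in the paper's argument either.
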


\begin{proof}
The proof is similar to [\citepNew{chen2016stochastic}, Lemma 8], we provide the proof for completeness. We first consider the following estimate which uses the Lipschitz property of $\nabla_\theta U(\theta)$:
\begin{align}
\nonumber
\mathds{E}_{\bar{\x}_n} \bigl[\| \nabla_\theta U(\theta_{n-\stl_n}) - \nabla_\theta U(\theta_{n}) \|\bigr] &\leq L\mathds{E}_{\bar{\x}_n} \bigl[\|\theta_{n-\stl_n} - \theta_{n}\|\bigr]\\
\nonumber&\leq L\mathds{E}_{\bar{\x}_n} \Bigl[\Big\|\sum_{i=n-\stl_n}^{n-1}(\theta_{i} - \theta_{i+1})\Big\|\Bigr]\\
\nonumber&\leq L\sum_{i=n-\stl_{n}}^{n-1}\mathds{E}_{\bar{\x}_n} \Bigl[\Big\|\theta_{i} - \theta_{i+1}\Big\|\Bigr]\\
\nonumber&\leq L\sum_{i=n-\stl_{n}}^{n-1}\mathds{E}_{\bar{\x}_n} \Bigl[\Big\|\x_{i} - \x_{i+1}\Big\|\Bigr]\\
&= L\sum_{i=n-\stl_{n}}^{n-1}\mathds{E}_{\bar{\x}_n} \bigl[f_{i+1} (\x_{i+1})\bigr]. \label{eqn:estimate of expectation}
\end{align}

Using law of total expectation, we have 
\begin{align}
\label{eqn:estimate of Euler integerator}
\nonumber \mathds{E}_{\bar{\x}_n}\bigl[f_{i+1} (\x_{i+1})\bigr]&=\mathds{E}\bigl[f_{i+1} (\x_{i+1})\bigr]\\
\nonumber&=\mathds{E}\bigl[\mathds{E}\bigl[f_{i+1} (\x_{i+1})\vert \x_i\bigr]\bigr]\\
\nonumber&=\mathds{E}\bigl[e^{h\Lo_{i+1}}f_{i+1}(\x_{i}) + {\cal O}(h^2)\bigr]\\
\nonumber&=\mathds{E}\bigl[f_{i+1}(\x_{i}) + h\Lo_{i+1}f_{i+1}(\x_{i}) + {\cal O}(h^2)\bigr]\\
&\leq h\mathds{E}\bigl[\Lo_{i+1} f_{i+1}(\x_{i})\bigr] + {\cal O}(h^2).
\end{align}
The third equality is due to the fact that Euler integrator is a first order integrator. Then we applied Assumption \Cref{asmp:Taylor expansion} and $f_{i+1}(\x_{i}) = 0$ to obtain the last two lines. Finally, by combining \eqref{eqn:estimate of expectation} and \eqref{eqn:estimate of Euler integerator}, we obtain:
\begin{align*}
\mathds{E}_{\bar{\x}_n} \bigl[\| \nabla_\theta U(\theta_{n-\stl_n}) - \nabla_\theta U(\theta_{n}) \|\bigr] &\leq L\sum_{i=n-\stl_{n}}^{n-1}(h\mathds{E}\bigl[\Lo_{i+1} f_{i+1}(\x_{i})\bigr] + {\cal O}(h^2))\\
&\leq L\sum_{i=n-\stl_{\text{max} }}^{n-1}(h\mathds{E}\bigl[\Lo_{i+1} f_{i+1}(\x_{i})\bigr] + {\cal O}(h^2))\\
&\leq L \stl_{\text{max}} h \max_{i \in \llbracket n-\stl_{\text{max}} + 1, n \rrbracket} \mathds{E}\bigl[\Lo_{i} f_{i}(\x_{i-1})\bigr] + {\cal O}(h^2).
\end{align*}
This completes the proof.
\end{proof}

\begin{lemma}
\label{lemma:gamma}
If Assumption \Cref{asmp:H_lipschitz} holds then the following bound holds: 
\begin{align}
\|\mathbf{\Gamma}_n\| = {\cal O}(\frac1{\beta}),
\end{align}
where $\mathbf{\Gamma}_n$ is defined in \eqref{eqn:sde_extended}.
\end{lemma}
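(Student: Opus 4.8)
The goal is to bound $\|\mathbf{\Gamma}_n\|$ where, from the definition in \eqref{eqn:sde_extended}, the nonzero block of $\mathbf{\Gamma}_t(\x)$ is $(1/\beta)\Gamma_t(\theta)$, and from \eqref{eqn:gamma_term} the components are $[\Gamma_t(\theta)]_i = \sum_{j=1}^d \partial [H_t(\theta)]_{ij}/\partial [\theta]_j$. So the plan is simply to show that each partial derivative $\partial [H_t(\theta)]_{ij}/\partial [\theta]_j$ is bounded by a constant that does not depend on $\beta$, and then collect the $1/\beta$ prefactor.

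First I would recall that \Cref{asmp:H_lipschitz} states two things: the L-BFGS matrices are Lipschitz continuous, \emph{and} they have bounded second-order derivatives. The relevant consequence here is that the first-order partial derivatives of the entries $[H_t(\theta)]_{ij}$ are bounded: Lipschitz continuity of $H_t$ in the operator norm, combined with the boundedness of second derivatives, gives a uniform bound $\bigl| \partial [H_t(\theta)]_{ij}/\partial [\theta]_j \bigr| \leq L_H$ (or a constant multiple thereof) for all $i,j$, all $\theta$, and all $t$. This is exactly the point the authors flag after \Cref{asmp:H_lipschitz} when they write that the assumption ``provides a direct control on the partial derivatives of $H_t$.''

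Given that, the second step is to propagate the bound through the sum and the norm. Since $[\Gamma_t(\theta)]_i$ is a sum of $d$ partial derivatives each bounded by a $\beta$-independent constant, we have $|[\Gamma_t(\theta)]_i| = {\cal O}(1)$, and hence $\|\Gamma_t(\theta)\| = {\cal O}(1)$ (the $\sqrt{d}$ and $d$ factors are absorbed into the ${\cal O}$ notation since $d$ is fixed). Because $\mathbf{\Gamma}_n$ is just $(1/\beta)\Gamma_n$ padded with zeros in the $\theta$-block, the Euclidean norm of the padded vector equals $(1/\beta)\|\Gamma_n\|$, which is ${\cal O}(1/\beta)$, as claimed.

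The argument is essentially a one-line consequence of the assumption, so there is no serious obstacle; the only point requiring a little care is making the implication from \Cref{asmp:H_lipschitz} to the uniform bound on the individual partial derivatives fully explicit, since the assumption is stated in terms of the matrix norm and the second derivatives rather than directly in terms of $\partial [H_t]_{ij}/\partial [\theta]_j$. I would make that bridge precise (Lipschitz continuity controls the first derivatives pointwise, uniformly in $t$ and $\theta$) and then the collection of the $1/\beta$ factor is immediate.
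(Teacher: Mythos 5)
Your proposal is correct and follows essentially the same route as the paper's proof: Lipschitz continuity of $H_t$ bounds the first-order partial derivatives of its entries by $L_H$, so $\|\Gamma_n\|$ is bounded independently of $\beta$ and the $1/\beta$ prefactor in $\mathbf{\Gamma}_n$ gives the claim. The paper additionally splits off the case $\stl_n>0$ (where it argues $\Gamma_n$ vanishes because $H_n$ does not depend on $\theta_n$), but your uniform bound subsumes that case, so nothing is missing.
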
 

\begin{proof}
If $\stl_n > 0$ then $\| \Gamma_n(\theta_n) \| = 0$ since $H_n$ will not depend on $\theta_n$ (see \eqref{eqn:gamma_term} for the definition of $\Gamma_n$). For $\stl_n = 0$, by the Lipschitz continuity of $H_n$, the first order partial derivatives of $H_n$ are all bounded by $L_H$. Then, $ \| \mathbf{\Gamma}_n\| = \frac1{\beta}\|\Gamma_n\|$ is therefore bounded by a quantity that is proportional to $\beta^{-1}$.
\end{proof}

\begin{lemma}
\label{cor:deltav}
Let $f_k (\x)\triangleq \| \x - \x_{k-1} \|$. Under the assumptions \Cref{asmp:H_lipschitz}-\ref{asmp:Taylor expansion}, the following bound holds:
\begin{align}
\label{eqn:bound delta V}
\mathds{E}\bigl[\Delta V_n\psi(\x_{n-1})\bigr] = {\cal O}\Bigl(\stl_{\text{max}} h \max_{i \in \llbracket n-\stl_{\text{max}} + 1, n \rrbracket} \mathds{E}\bigl[\Lo_{i} f_{i}(\x_{i-1})\bigr] + h^2 + \beta^{-1}\Bigr).
\end{align}
\end{lemma}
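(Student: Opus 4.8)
The plan is to decompose the operator $\Delta V_n$ according to its definition and to control each piece separately. Recalling that
\begin{align*}
\Delta V_n \psi(\x_{n-1}) = \Bigl( H_n(\theta_{n-\stl_n})\bigl(\nabla_\theta \tilde{U}_n(\theta_{n-\stl_n}) - \nabla_\theta U(\theta_n)\bigr) + \tfrac1{\beta}\Gamma(\theta_{n-\stl_n}) \Bigr)\cdot \nabla_p \psi(\x_{n-1}),
\end{align*}
I would first split the gradient discrepancy into a subsampling part and a staleness part,
\begin{align*}
\nabla_\theta \tilde{U}_n(\theta_{n-\stl_n}) - \nabla_\theta U(\theta_n) = \bigl(\nabla_\theta \tilde{U}_n(\theta_{n-\stl_n}) - \nabla_\theta U(\theta_{n-\stl_n})\bigr) + \bigl(\nabla_\theta U(\theta_{n-\stl_n}) - \nabla_\theta U(\theta_n)\bigr),
\end{align*}
so that $\mathds{E}[\Delta V_n \psi(\x_{n-1})]$ becomes a sum of a subsampling term, a staleness term, and a $\Gamma$ term, each to be bounded on its own.

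The subsampling term I expect to vanish in expectation. Conditioning on $\bar{\x}_n = (\x_0,\dots,\x_n)$, the factors $H_n(\theta_{n-\stl_n})$ and $\nabla_p\psi(\x_{n-1})$ are $\bar{\x}_n$-measurable (the L-BFGS matrix is assembled from \emph{past} subsamples and is independent of the current $\Omega_n$), while the stochastic gradient is unbiased and $\Omega_n$ is drawn independently of $\bar{\x}_n$. Hence $\mathds{E}[\nabla_\theta \tilde{U}_n(\theta_{n-\stl_n}) - \nabla_\theta U(\theta_{n-\stl_n}) \mid \bar{\x}_n] = 0$, and by the tower property the whole subsampling term contributes exactly $0$. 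The $\Gamma$ term is immediate: by Lemma~\ref{lemma:gamma} we have $\|\tfrac1{\beta}\Gamma(\theta_{n-\stl_n})\| = {\cal O}(1/\beta)$, and combined with the bound $\|\nabla_p\psi\| \le \|\D\psi\| \le C_1 V^{r_1}$ together with $\sup_n \mathds{E} V^r(\x_n) < \infty$ from Assumption \Cref{asmp:poisson}, this term is ${\cal O}(1/\beta)$.

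For the staleness term I would estimate $|H_n(\theta_{n-\stl_n})(\nabla_\theta U(\theta_{n-\stl_n}) - \nabla_\theta U(\theta_n))\cdot \nabla_p\psi(\x_{n-1})| \le \|H_n\|\,\|\nabla_\theta U(\theta_{n-\stl_n}) - \nabla_\theta U(\theta_n)\|\,\|\nabla_p\psi(\x_{n-1})\|$, use that $\|H_n\|$ is uniformly bounded (the cautious update of Algorithm~\ref{algo:worker} keeps the spectrum of $H_n$ bounded away from $0$ and $\infty$), bound $\|\nabla_p\psi\|$ via Assumption \Cref{asmp:poisson}, and then invoke Lemma~\ref{lemma:expectation on X} to convert the expected gradient difference into ${\cal O}(\stl_{\text{max}} h \max_{i} \mathds{E}[\Lo_i f_i(\x_{i-1})] + h^2)$. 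Adding the three contributions then yields the claimed bound. The main obstacle is the last step: Lemma~\ref{lemma:expectation on X} controls only the \emph{first} moment of the staleness gradient norm, whereas $\nabla_p\psi$ is bounded merely in an $L^r$ sense through $V^{r_1}$, so decoupling the product requires care — either a uniform bound on $\|H_n\|\,\|\nabla_p\psi\|$ or a Cauchy–Schwarz step against a second-moment refinement of Lemma~\ref{lemma:expectation on X}, while simultaneously ensuring that the measurability/independence structure used to annihilate the subsampling term remains valid given that $H_n$ is built from previous subsamples.
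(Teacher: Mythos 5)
Your proposal follows essentially the same route as the paper's proof: split off the $\Gamma$ term and bound it via Lemma~\ref{lemma:gamma}, kill the subsampling discrepancy by unbiasedness and independence of $\Omega_n$ from $\bar{\x}_n$, and control the remaining staleness term through the boundedness of $H_n$ and $\nabla_p\psi$ together with Lemma~\ref{lemma:expectation on X}. The paper resolves the product-decoupling issue you flag simply by treating $\|H_n\|\,\|\nabla_p\psi(\x_{n-1})\|$ as uniformly bounded (citing boundedness of $H_n$ and Assumption~\Cref{asmp:poisson}), which is exactly the first of the two options you propose.
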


\begin{proof}
First, by using the triangular inequality we have:
\begin{align}
\label{eqn:main ineq}
\nonumber \Vert\mathds{E}\bigl[\Delta V_n\psi(\x_{n-1})\bigr]\Vert =& \Vert\mathds{E}\bigl[( H_{n}(\theta_{n-\stl_n})(\nabla_\theta \tilde{U}_{n-\stl_n}(\theta_{n-\stl_n}) - \nabla_\theta U(\theta_{n})) + \frac{1}{\beta}\Gamma_n(\theta_{n-\stl_n}))\cdot\nabla_p \psi(\x_{n-1})\bigr]\Vert\\
\nonumber \leq& \Vert\mathds{E}\bigl[ H_{n}(\theta_{n-\stl_n})(\nabla_\theta \tilde{U}_{n-\stl_n}(\theta_{n-\stl_n}) - \nabla_\theta U(\theta_{n}))\cdot\nabla_p \psi(\x_{n-1})\bigr]\Vert \\
&+ \Vert\mathds{E}\bigl[\frac{1}{\beta}\Gamma_n(\theta_{n-\stl_n})\cdot\nabla_p \psi(\x_{n-1})\bigr]\Vert .
\end{align}
Applying Assumption \Cref{asmp:poisson} and Lemma \ref{lemma:gamma}, we obtain the bound for the second term in the above sum:
\begin{align}
\label{eqn:sub ineq}
A_1\triangleq\Vert\mathds{E}\bigl[\frac{1}{\beta}\Gamma_n(\theta_{n-\stl_n})\cdot\nabla_p \psi(\x_{n-1})\bigr]\Vert = {\cal O}(\beta^{-1}).
\end{align}
We note that the expectation is taken over $(\bar{\x}_n,\Omega_n)$, where $\bar{\x}_n$ and $\Omega_n$ are independent. Hence, the first term in \eqref{eqn:main ineq} can be rewritten as follows:
\begin{align*}
A_2 \triangleq& \Vert\mathds{E}\bigl[ H_{n}(\theta_{n-\stl_n})(\nabla_\theta \tilde{U}_{n-\stl_n}(\theta_{n-\stl_n}) - \nabla_\theta U(\theta_{n}))\cdot\nabla_p \psi(\x_{n-1})\bigr]\Vert\\
=& \Vert\mathds{E}_{\bar{\x}_n}\bigl[\mathds{E}_{\Omega_n}\bigl[ H_{n}(\theta_{n-\stl_n})(\nabla_\theta \tilde{U}_{n-\stl_n}(\theta_{n-\stl_n}) - \nabla_\theta U(\theta_{n}))\cdot\nabla_p \psi(\x_{n-1})\bigr]\bigr]\Vert\\
=&\big\Vert\mathds{E}_{\bar{\x}_n}\Bigl[\mathds{E}_{\Omega_n}\bigl[ H_{n}(\theta_{n-\stl_n})(\nabla_\theta \tilde{U}_{n-\stl_n}(\theta_{n-\stl_n}) - \nabla_\theta \tilde{U}_{n-\stl_n}(\theta_{n}))\cdot\nabla_p \psi(\x_{n-1})\bigr] + \mathds{E}_{\Omega_n}\bigl[ H_{n}(\theta_{n-\stl_n})(\nabla_\theta \tilde{U}_{n-\stl_n}(\theta_{n}) 
\\&- \nabla_\theta U_{n}(\theta_{n}))\cdot\nabla_p \psi(\x_{n-1})\bigr] \Bigr]\big\Vert.
\end{align*}
As $H_n$ and $\nabla_p \psi(\x_{n-1})$ are independent of the random subsample $\Omega_n$, we have
\begin{align*}
\mathds{E}_{\Omega_n}\bigl[ H_{n}(\theta_{n-\stl_n})(\nabla_\theta \tilde{U}_{n-\stl_n}(\theta_{n}) - \nabla_\theta U_{n}(\theta_{n}))\cdot\nabla_p \psi(\x_{n-1})\bigr] &= H_{n}(\theta_{n-\stl_n})\mathds{E}_{\Omega_n}\bigl[ \nabla_\theta \tilde{U}_{n-\stl_n}(\theta_{n}) - \nabla_\theta U_{n}(\theta_{n})\bigr]\cdot\nabla_p \psi(\x_{n-1}) \\&= 0.
\end{align*}
As a result,
\begin{align}
\label{eqn:sub ineq 2}
\nonumber A_2 &=\Vert\mathds{E}_{\bar{\x}_n}\bigl[\mathds{E}_{\Omega_n}\bigl[ H_{n}(\theta_{n-\stl_n})(\nabla_\theta \tilde{U}_{n-\stl_n}(\theta_{n-\stl_n}) - \nabla_\theta \tilde{U}_{n-\stl_n}(\theta_{n}))\cdot\nabla_p \psi(\x_{n-1})\bigr]\bigr]\|\\
\nonumber &=\Vert\mathds{E}_{\bar{\x}_n}\bigl[ H_{n}(\theta_{n-\stl_n})(\nabla_\theta U(\theta_{n-\stl_n}) - \nabla_\theta U(\theta_{n}))\cdot\nabla_p \psi(\x_{n-1})\bigr]\|\\
\nonumber &\leq C\mathds{E}_{\bar{\x}_n}\bigl[ \|\nabla_\theta U(\theta_{n-\stl_n}) - \nabla_\theta U(\theta_{n})\|\bigr]\\
&={\cal O}\Bigl(\stl_{\text{max}} h \max_{i \in \llbracket n-\stl_{\text{max}} + 1, n \rrbracket} \mathds{E}\bigl[\Lo_{i} f_{i}(\x_{i-1})\bigr] + h^2\Bigr).
\end{align}
The inequality in \eqref{eqn:sub ineq 2} is deduced from the fact that $H_n$ is bounded by \citepNew{berahas2016multi}[Lemma3.3] and $\nabla_p \psi(\x_{n-1})$ is bounded by assumptions, and the last equality is due to Lemma \ref{lemma:expectation on X}. Finally, by combining \eqref{eqn:main ineq}, \eqref{eqn:sub ineq}, and \eqref{eqn:sub ineq 2}, we obtain \eqref{eqn:bound delta V}, which concludes the proof.
\end{proof}

\section{Additional Experimental Results}

In this section, we provide the result where we illustrate the iteration speedup of as-L-BFGS on the ML-$1$M dataset.

\begin{figure}[H]
\centering
\includegraphics[width=0.3\columnwidth]{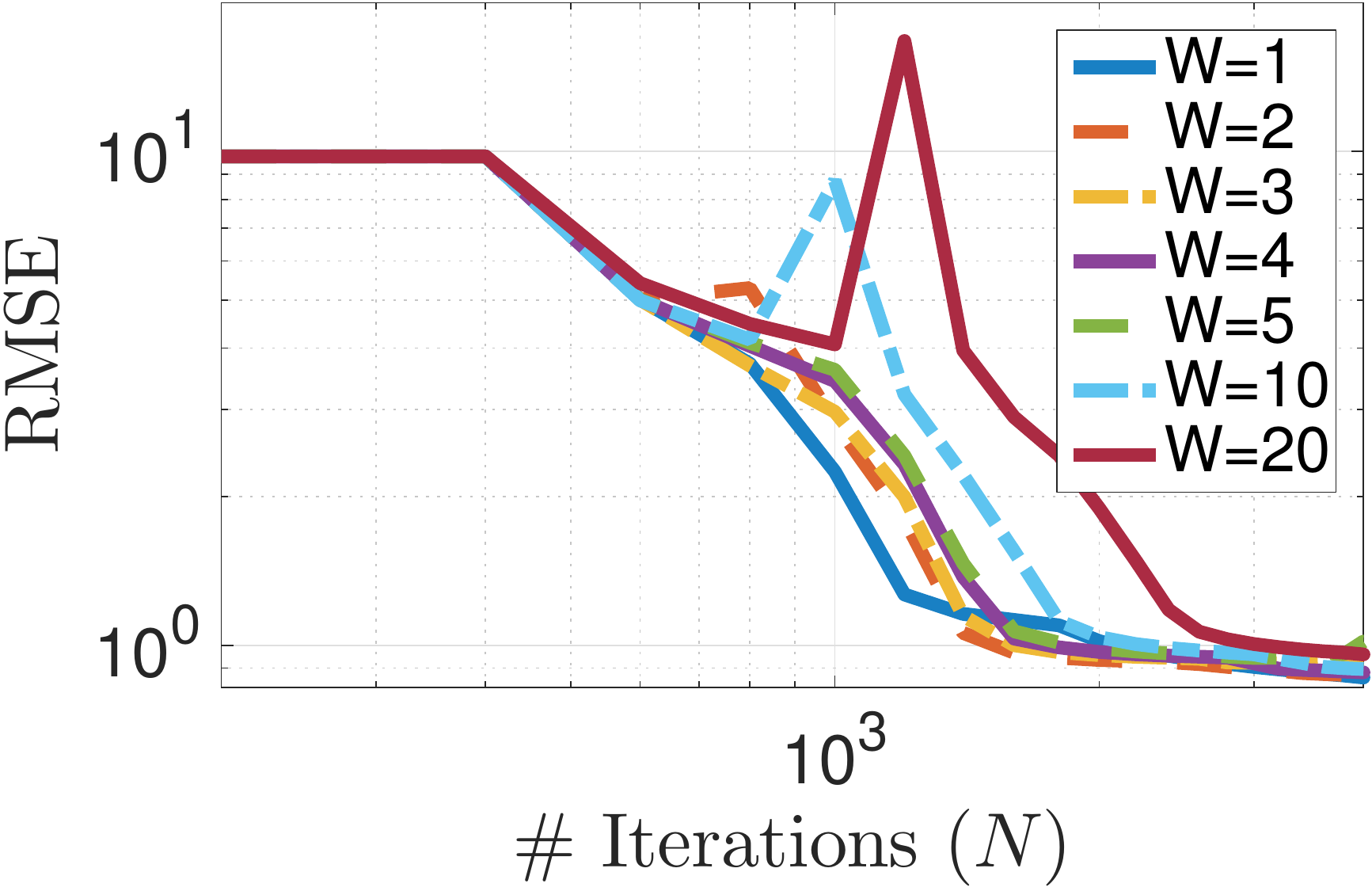}
\caption{The convergence behavior of as-L-BFGS on the ML-$1$M dataset for increasing number of workers.}
\label{fig:ml_speedup_iter}
\end{figure}

\section{Algorithm Parameters Used in the Experiments}

\subsection{Linear Gaussian model}

Table~\ref{tab:t1} lists the algorithm parameters for the synthetic data experiments. We fixed the L-BFGS memory sizes for {mb-L-BFGS} and {as-L-BFGS} to $M=3$. 
The remaining parameters are the step sizes ($h$, $h'$), timeout duration of {mb-L-BFGS} server ($T_\text{mb}$), the friction parameter ($\gamma'$), and the inverse temperature ($\beta$) of {as-L-BFGS}.

\begin{table}[h!]
	\caption{The list of algorithm parameters that are used in the experiments on the linear Gaussian model.}
	\begin{center}
		\scalebox{1}{
			\begin{tabular}{c | c | c | c| c | c |c}
				{a-SGD} & \multicolumn{2}{c |}{{mb-L-BFGS}} & \multicolumn{3}{c}{{as-L-BFGS}} \\ 
				\toprule
				$h$ & $h$ & $T_\text{mb}$ (base units) & $h'$ & $\gamma'$ & $\beta$\\ 
				\midrule
				 $1\times 10^{-3}$ & $5 \times 10^{-2}$ & $10$& $4\times 10^{-4}$ & $3 \times 10^{-2}$ & $5\times 10^{2}$  \\ 
				\bottomrule
			\end{tabular}
		}
	\end{center}
	\label{tab:t1}
\end{table}

Table~\ref{tab:t3} lists the parameters of the simulator. The parameters are (i) $\mu_m$: the average computational time spent by the master node at each iteration, (ii) $\mu_w$: the average computational time spent by a single worker at each iteration, and (iii) $\tau$: the time spent for communication per iteration. In all cases we set $\tau = 10$, $N_\Omega = N_Y/100$, $N_O = N_\Omega/3$.

\begin{table}[h!]
	\caption{The list of simulator parameters that are used in the experiments on the linear Gaussian model.}
	\begin{center}
		\scalebox{1}{
			\begin{tabular}{c | c | c | c| c | c  }
				\multicolumn{2}{c |}{{a-SGD}} & \multicolumn{2}{c |}{{mb-L-BFGS}} & \multicolumn{2}{c}{{as-L-BFGS}} \\ 
				\toprule
				$\mu_m$ & $\mu_w$  & $\mu_m$ & $\mu_w$  &  $\mu_m$ & $\mu_w$   \\ 
				\midrule
				$0$ & $1000 \times \frac{N_\Omega}{N}$  &  $30$ & $1000 \times \frac{N_\Omega}{N_Y}$ & $0$ &$1000 \times \frac{N_\Omega}{N_Y} + 60$ \\
				\bottomrule
			\end{tabular}
		}
	\end{center}
	\label{tab:t3}
\end{table}

\subsection{Large-scale matrix factorization}

Table~\ref{tab:t2} lists the algorithm parameters for different data sets. We fixed the L-BFGS memory sizes for {mb-L-BFGS} and {as-L-BFGS} to $M=3$. 
In all experiments we set $\rho =3$, $N_\Omega = N_Y/100$, $N_O = N_\Omega/3$.

\begin{table}[h!]
	\caption{The list of algorithm parameters that are used in the experiments on the large scale matrix factorization.}
	\begin{center}
		\scalebox{1}{
			\begin{tabular}{ l |c | c | c | c| c | c |c}
				& {a-SGD} & \multicolumn{2}{c |}{{mb-L-BFGS}} & \multicolumn{3}{c}{{as-L-BFGS}} \\ 
				\toprule
				& $h$ & $h$ & $T_\text{mb}$ (m. sec.) & $h'$ & $\gamma'$ & $\beta$\\ 
				\midrule
				ML-$1$M  & $1\times 10^{-6}$ & $5 \times 10^{-7}$ & $400$& $2\times 10^{-8}$ & $1 \times 10^{-1}$ & $1\times 10^{3}$   \\
				ML-$10$M & $2\times 10^{-7}$ & $1 \times 10^{-8}$ & $3400$& $1\times 10^{-9}$ & $3 \times 10^{-2}$ & $1\times 10^{3}$  \\  
				ML-$20$M & $1\times 10^{-7}$ & $1 \times 10^{-8}$ & $4500$& $1\times 10^{-9}$ & $1 \times 10^{-3}$ & $1\times 10^{3}$ \\  
				\bottomrule
			\end{tabular}
		}
	\end{center}
	\label{tab:t2}
\end{table}

\bibliographyNew{asynch_mcmc}
\bibliographystyleNew{icml2017}

\end{document}